\def\eqref#1{equation~\ref{#1}}
\def\1{\bm{1}}
\DeclareMathAlphabet{\mathsfit}{\encodingdefault}{\sfdefault}{m}{sl}
\SetMathAlphabet{\mathsfit}{bold}{\encodingdefault}{\sfdefault}{bx}{n}
\newcommand{\E}{\mathbb{E}}
\newcommand{\R}{\mathbb{R}}
\newtheorem{theorem}{Theorem}[section]
\newtheorem{lemma}[theorem]{Lemma}
\newtheorem{definition}[theorem]{Definition}
\newtheorem{assumption}[theorem]{Assumption}
\newtheorem{remark}[theorem]{Remark}
\title{Federated Fairness without Access to Sensitive Groups}
\author{%
  Afroditi Papadaki \\
  {\normalsize University College London}\\ 
  {\small \color{NavyBlue}\texttt{a.papadaki.17@ucl.ac.uk}}\\
  \and
  Natalia Martinez\\
  {\normalsize IBM Research} \\ 
   {\small \color{NavyBlue}\texttt{natalia.martinez.gil@ibm.com}}\\
  \and
  Martin Bertran\\
  {\normalsize Amazon Web Services} \\ 
   {\small \color{NavyBlue}\texttt{maberlop@amazon.com}}\\
  \and
  Guillermo Sapiro \\
  {\normalsize Duke University and Apple } \\
   {\small \color{NavyBlue}\texttt{guillermo.sapiro@duke.edu}}\\
  \and
  Miguel Rodrigues\\
  {\normalsize University College London} \\
  {\small \color{NavyBlue} \texttt{m.rodrigues@ucl.ac.uk}}}
\date{\vspace{-5ex}}
\begin{document}

\maketitle

\begin{abstract}
 Current approaches to group fairness in federated learning assume the existence of predefined and labeled sensitive groups during training. However, due to factors ranging from emerging regulations to dynamics and location-dependency of protected groups, this assumption may be unsuitable in many real-world scenarios. In this work, we propose a new approach to guarantee group fairness that does not rely on any predefined definition of sensitive groups or additional labels. Our objective allows the federation to learn a Pareto efficient global model ensuring worst-case group fairness and it enables, via a single hyper-parameter, trade-offs between fairness and utility, subject only to a group size constraint. This implies that any sufficiently large subset of the population is guaranteed to receive at least a minimum level of utility performance from the model. 
The proposed objective encompasses existing approaches as special cases, such as empirical risk minimization and subgroup robustness objectives from centralized machine learning. We provide an algorithm to solve this problem in federation that enjoys convergence and excess risk guarantees. Our empirical results indicate that the proposed approach can effectively improve the worst-performing group that may be present
without unnecessarily hurting the average performance, exhibits superior or comparable performance to relevant baselines, and achieves a large set of solutions with different fairness-utility trade-offs.
\end{abstract}

\section{Introduction}

Federated learning (FL) is a paradigm that allows multiple entities/clients, usually coordinated by a central server, to collaboratively train a model to achieve some common learning objective on their combined data. The clients do not share their raw data, but rather limited information (e.g., model updates, risk values, etc.) during the training procedure \cite{DBLP:journals/corr/KonecnyMRR16,DBLP:journals/corr/KonecnyMYRSB16}. 
Such a learning paradigm has been widely used for high-stakes decision-making applications such as open banking \cite{Long2020} and genomics research \cite{weinstein2013cancer} to guarantee that data is kept decentralized and private.

A key concern in federated learning is ensuring the fairness of the resulting model across various sensitive groups \cite{DBLP:journals/corr/KonecnyMRR16}, where these groups may be present in different proportions across clients.
This challenge has been the focus of many works -- such as \cite{BGL,papadaki} -- where it is assumed that clients are aware of the sensitive groups during training and can accurately assign group memberships to each data point. Nevertheless, knowledge of the sensitive groups and access to group memberships might not always be feasible for various reasons. For instance, in a scenario where various hospitals collaborate to learn a group-fair diagnostic model through federated learning, it is often unrealistic to expect that sensitive groups are identified, or that medical records include accurate information about patients' race, religion, or sexual orientation. This is because obtaining these labels can be costly \cite{10.1145/3025453.3025930}, require specialized knowledge, or breach privacy regulations (e.g., GDPR \cite{EUdataregulations2018} or CCPA \cite{ccpa}) that restrict the collection and utilization of certain types of personal information.

\begin{wrapfigure}{r}{0.55\columnwidth}
  \vspace{-1.55\baselineskip}\includegraphics[width=0.55\columnwidth]{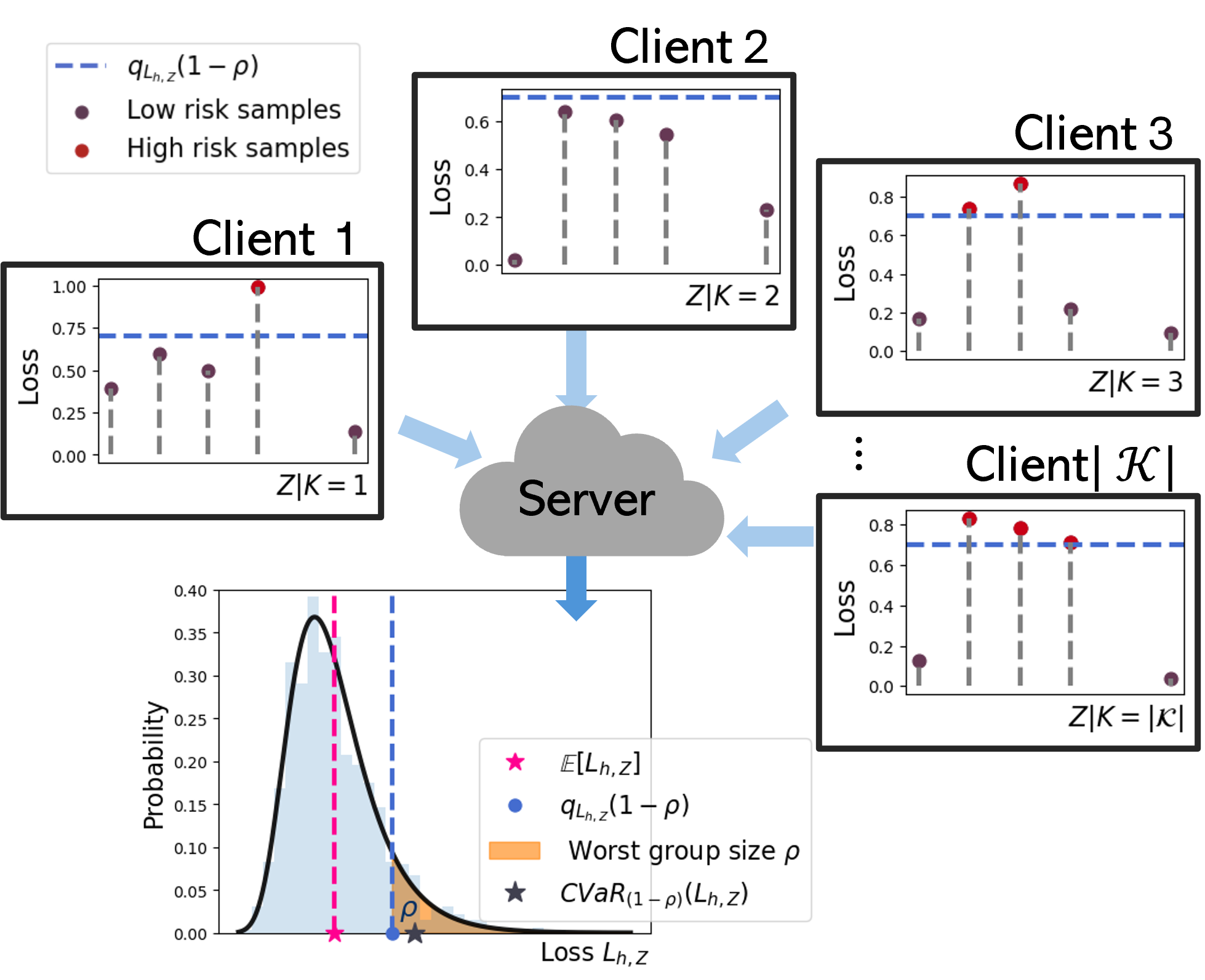}\vspace{-1.\baselineskip}

    \caption{We consider a federated learning setting with  $K \in \mathcal{K}$ clients. Our method maximizes the performance of the worst possible group of size $\rho$ that can be formulated from the union of the local individuals/samples $z$, that is, within a model class, no other model performs better on its worst $\rho$ fraction of the samples. Equivalently, no other model has a lower $(1-\rho)$-th superquantile of its loss distribution. We achieve this objective at the lowest possible cost to the non-critical samples. We make no assumptions on the distribution of the worst-performing samples amongst clients, and note that (a) worst-performing sensitive groups might not align with a single conventional demographic, and (b) `groups' and `clients' are not synonyms in our setting.
    }\label{fig:proposed_fw}    
\end{wrapfigure}

In this work, we address the challenge of achieving federated group fairness for \textit{any} potential definition of sensitive groups, even those defined \textit{after} the model is deployed, assuming the group definitions cover a significant, but configurable, subset of the overall population. 
We focus on the Rawlsian max-min notion of fairness \cite{rawls2001justice} and on the no-harm principle where degrading the performance of a particular group can only be justified if it improves a disadvantaged group. 
We introduce a new learning objective, the Relaxed Conditional Value-at-Risk (RCVaR), designed to enhance the performance of the worst-off subset of data samples without unnecessarily reducing the performance of the remaining ones. 
Our objective depends on two parameters: (a) the trade-off parameter $\epsilon$ that allows to flexibly define the importance added to the average utility versus (minimax) fairness; and (b) the constraint $\rho$ which bounds the size of the worst-case group, depending on some common policy/preference. 
Our approach enables clients to (a) identify any global and potentially critical sensitive group, independently of whether it exists on its local distribution during training; and (b) learn a global hypothesis that allows a trade-off between mean performance and fairness. An example of our method is provided in Figure \ref{fig:proposed_fw}. 

\paragraph{Main Contributions.} 
    To the best of our knowledge, we are the first to address the challenge of (minimax) Pareto federated group fairness with inhomogeneous and unknown sensitive groups, where clients and groups are not aligned. We introduce a new fairness-aware objective -- RCVaR -- that allows improving the performance of the high-risk samples, subject to only a group size constraint, while ensuring the best possible performance on the remaining samples.
    We draw formal connections between the proposed objective and {existing ones, such as DRO \cite{DRO} and BPF \cite{pmlr-v139-martinez21a}}, demonstrating that RCVaR can also be used for learning Pareto subgroup robust models in centralized settings. 
    We then introduce an algorithm -- FedSRCVaR -- that solves a smoothed approximation of RCVaR in federated learning settings. We establish the algorithm's convergence and excess risk properties, and show that the proposed objective can be easily federalized compared to centralized learning objectives. Finally, we empirically study the wide range of solutions that can be achieved by our approach through the trade-off parameter for various group sizes, and compare our method against other relevant baselines in centralized and federated learning settings using real datasets.  

\section{Related Work}\label{sec:related_work}
\paragraph{Minimax Fairness in ML.} 
Minimax fairness criterion -- or Rawlsian max-min fairness from a utility point of view \cite{rawls2001justice,martinez2020,diana2020convergent}--, is the no-harm (Pareto optimal) approach to equality of errors \cite{diana2020convergent} since it aims to improve the model's utility for the worst-performing group without unjustifiably diminishing the performance of other groups. 
In the case of unknown sensitive groups, fairness is measured as the utility perceived by the worst-served subset of individuals / samples \cite{DRO}, not as a difference in performance or outcomes between groups. 
In this work, we leverage this notion to learn a model that improves the worst-performing group in the most general formulation possible, that is, the worst group (subset) of samples distributed across all the clients in the federation. 
We note that other fairness definitions such as statistical parity, equality of odds, or equality of risks can conflict with each other, leading to sub-optimal outcomes where certain groups are harmed without any improvement to other groups, as discussed in \cite{kleinberg,chen2018classifier,article_mmpf_}.

\paragraph{Fairness without Sensitive Groups in Centralized ML.} A recent body of literature on centralized machine learning deals with group fairness without explicit protected groups. Early works \cite{proxy-fairness,10.1287/mnsc.2016.2579} address the problem of unknown group annotations by designing a proxy variable that replaces the true sensitive group variable so that conventional group fairness methods can be deployed. These approaches require knowledge of the true sensitive groups, though the sample's group labels are considered unavailable, which is hard to obtain for many applications.

Subsequent research addresses fairness without group labels through (sub)group robustness. DRO \cite{DRO} optimizes the performance of samples that exceed a specified risk threshold. Similarly, blind Pareto fairness (BPF) \cite{pmlr-v139-martinez21a} minimizes the worst-case risk across any possible group distribution formulated by the training data, subject to a group size constraint, while ensuring that the model is Pareto efficient \cite{miettinen2012nonlinear}. \cite{ARL} and \cite{sohoni2020no} use auxiliary models to discover the worst-performing group. 

These works are designed for centralized settings where data is collected and processed by a single entity. Our research focuses on federated learning settings where the data is heterogeneously distributed across multiple clients and cannot be shared. 
We build upon DRO and BPF, due to the interpretability of their adversary, to propose a relaxed superquantile criterion that allows achieving different levels of (minimax) group fairness through a hyperparameter $\epsilon$ for a fixed group size $\rho$. Both DRO \cite{DRO} and BPF \cite{pmlr-v139-martinez21a} can be seen as a particular case of our proposed RCVaR formulation, as discussed in Section \ref{sec:prob_form}. Similar to BPF and differing from DRO, RCVaR considers properly Pareto optimal solutions and incorporates trade-offs between average performance and fairness. 

\paragraph{Fair Federated Learning.} {The federated learning literature explores various notions of fairness, with a significant portion of these works \cite{AFL,DRFA,fedmgda+,DBLP:journals/corr/abs-2108-0274} focusing on achieving fairness across clients. This is typically done by optimizing the model to enhance the performance of the client (or cluster of clients) that exhibits the lowest performance. However, as \cite{papadaki} formally demonstrates, fairness across clients does not guarantee fairness across different sensitive populations within those clients, except under specific circumstances, such as when each participant's dataset exclusively represents a single sensitive group. 

The works in \cite{cui2021addressing,zhang2021unified} propose approaches to achieve group fairness within each individual client (hence, targeting only groups available within the client at training and testing time), while the works in \cite{enforcing-fairness,papadaki,BGL} propose methods for learning models that are fair across all the known sensitive groups available in the clients, even if some participants have access to a subset of them during training.} To our knowledge, the only work that considers scenarios, where group data cannot be leveraged, is \cite{juarez2023you}, but it assumes that the collection of sensitive groups is known apriori (though not used due to privacy concerns). Hence, they recommend local differential private mechanisms to alleviate privacy issues, while allowing information about the group memberships to be used for learning fair models. 

Similar to the aforementioned approaches, our goal is to learn a model that ensures group fairness across any sensitive groups that exist in the clients' data. However, we distinguish ourselves by focusing on a more complex scenario, where the sensitive populations are defined in terms of the performance of a given model, and cannot be labeled before the model learning process takes place. Hence, no a-priori group information can be incorporated into the training phase.

\section{Problem Formulation}\label{sec:prob_form}
 
\subsection{Minimax Fairness for Worst  Case Scenarios}
\label{preliminaries}
Let the pair of random variables {$Z=(X,Y) \in \mathcal{X} \times \mathcal{Y}$} represent the input features and categorical targets, generated from a distribution ${p(Z)=p(X,Y)}$.  
Let also $\ell:\Delta^{|\mathcal{Y}|-1} \times \Delta^{|\mathcal{Y}|-1} \to {\R}_+$ be a loss function and $h$ be a hypothesis drawn from the hypothesis class  $\mathcal{H}=\{h:\mathcal{X} \rightarrow \Delta^{|\mathcal{Y}|-1} \}$, where $\Delta^{|\mathcal{Y}|-1}$ is the probability simplex over $\mathcal{Y}$.  
We assume there is no prior knowledge about the groups or sensitive labels associated with any $z$.

In particular, let $L_{h,Z} \coloneqq \ell(h;Z)$ denote a random variable representing the loss associated with a hypothesis $h \in \mathcal{H}$. For a predefined probability $\rho \in (0, 1)$, the $(1-\rho)$-quantile function is defined as 
\begin{equation}
    q_{{L_{h,Z}}}(1-\rho):=\inf \big \{ \beta \in \R:p({L_{h,Z}} \leq \beta) \geq 1-\rho \big \},
\end{equation} 
and the $(1-\rho)$-superquantile, also known as the conditional value-at-risk (CVaR), function at confidence level $(1-\rho)$ is defined as 
\begin{equation}\label{cvar_central}
     CVaR_{(1-\rho)}({L_{h,Z}})=\mathop{\E}\limits_{ {Z }}[{L_{h,Z}}|{L_{h,Z}}\geq   q_{{L_{h,Z}}}(1-\rho)].
\end{equation}
The quantity in Eq. \ref{cvar_central} is a measure of the upper tail behaviour of the distribution $p({L_{h,Z}})$ and, as shown in  \cite{ROCKAFELLAR2014140}, it can be expressed for a bounded loss function, i.e., $0\leq\ell(h;z)\leq B$, $B>0$, $\forall z \in \mathcal{Z}$, as 
\vspace{-0.02in}
\begin{equation}\label{cvar_global_variational}
   CVaR_{(1-\rho)}({L_{h,Z}}) = \min\limits_{ c\in [0,B]}  
     c+ \frac{1}{\rho}\mathop{\E}\limits_{ {Z\sim p(Z)}}[  ({L_{h,Z}} -c)_+ ],
\end{equation}
where $(\cdot)_+:=\max\{0, \cdot \}$ and the second term represents the regret of any positive realizations of ${L_{h,Z}}$. Note that the argument that minimizes the objective in Eq. \ref{cvar_global_variational}
is the quantile $q_{{L_{h,Z}}}(1-\rho)$.  

Therefore, we can formulate the problem of learning a minimax group fair hypothesis with no knowledge of sensitive group populations as follows,
\begin{equation}\label{minh_cvar_global_variational}
     h^*,c^*=\arg\min\limits_{h \in \mathcal{H},c\in [0,B]}  c+ \frac{1}{\rho}\mathop{\E}\limits_{ {Z\sim p(Z)}}[  ({L_{h,Z}} -c)_+ ]. 
\end{equation}
The optimization problem in Eq. \ref{minh_cvar_global_variational} allows for minimax fair solutions, since it optimizes for the worst tail risk with sample size $\rho$, or equivalently the worst performing samples that exceed threshold $c$. 

Nevertheless, Eq. \ref{minh_cvar_global_variational} ignores any data that is not considered high-risk (i.e., samples that are below the threshold $c$) and hence allows for solutions that are weakly Pareto optimal \cite{miettinen2012nonlinear}. The formal definition of weakly Pareto optimality is provided in Definition \ref{def:weakly_pareto}.

\begin{definition}[Weak Pareto optimality]\label{def:weakly_pareto}
    A hypothesis $h^* \in \mathcal{H}$ is weakly Pareto optimal if for any possible sensitive group $g \in \mathcal{G}$, $\nexists  h \in \mathcal{H}: \mathop{\E}\limits_{Z|g}[\ell(h;Z)]< \mathop{\E}\limits_{Z|g}[\ell(h^*;Z)]$ . 
\end{definition}
Weakly Pareto optimal hypotheses can potentially compromise the performance of low-risk sensitive groups, especially when the input space exhibits regions of no uncertainty regarding the target class (i.e., the data is perfectly separable). 

We next propose an objective that does not unnecessarily harm the low-risk sensitive group. We also focus on the more challenging scenario of federated learning where the data might be heterogeneously distributed across clients and the goal is to achieve a solution equivalent to centralized machine learning.

\subsection{Federated Minimax Blind Fairness}
In the context of federated learning, we consider an additional random variable $K$ representing the clients in the federation. Each client $k \in \mathcal{K}$ holds data modelled by its own local distribution ${p(Z|K=k)}=p(X|K=k)p(Y|X,K=k)$. {Therefore, the data of the entire federation can be described via the mixture distribution} ${p(Z)}=\sum\limits_{k \in \mathcal{K}}p(K=k) {p(Z|K=K)}$.

Let ${L_{h,Z|K=k}:= \ell(h;Z)}$, with $ {Z\sim p(Z|K=K)}$, denote a random variable representing the local loss on $Z$ induced by a hypothesis $h$ in client $k$. We formulate relaxed conditional value-at-risk (RCVaR), a generalization of the objective in Eq. \ref{cvar_global_variational} that (a) produces properly Pareto optimal solutions, and (b) is suitable for federated learning scenarios with inhomogeneously distributed data across clients, as follows, 
\begin{equation}\label{Fed_cvar_tradeoff}
\begin{array}{cc}
      \min\limits_{h \in \mathcal{H}}  \Big \{(1-\epsilon) CVaR_{(1-\rho)}({L_{h,Z}}) + \epsilon  \mathop{\E}\limits_{Z \sim {{p(Z)}}}[{L_{h,Z}}]\Big \}\\  \\
     = \min\limits_{h \in \mathcal{H},c \in [0,B]} \mathop{\E}\limits_{K } \Big [ \mathop{\E}\limits_{Z|k} \big[(1-\epsilon)
    \bigg( c+     \frac{1}{\rho}(L_{h,Z|K=k }-c)_+ \bigg)   +  \epsilon  {L_{h,Z|K=k }}\big] \Big],
\end{array}
\end{equation}
 where a hyperparameter $\epsilon \in [0,1]$ induces a trade-off between the average and worst-case group performances.
The threshold $c$ is uniformly applied across all clients in the federation to identify the samples that belong to the \textit{global} high-risk and the low-risk groups.\footnote{We focus on the high- and low-risk groups within the data distribution, accommodating any potential protected group, either binary or multigroup. The equivalence of minimax worst-case group performance on a two-group or $n$-group formulation, is shown in Lemma 3.1 in \cite{pmlr-v139-martinez21a}.} Also, it allows the selection of any partition of overall size $\rho$ across clients, and to consider larger local group sizes in clients with worse performances from clients with high performance. Otherwise, assigning the same $\rho$ across clients would yield a model that minimizes the server average of the worst per-client partition, which yields an overall partition of size $\rho$, but is a weaker adversary than our framework and does not guarantee the same performance as centralized settings.

Moreover, RCVaR ensures minimax properly Pareto fairness, where the worst possible group is formed by the high-risk samples subject to a predefined group size constraint $\rho$. This combination of minimax fairness and proper Pareto optimality is crucial for high-stakes decision-making to ensure that the produced model does not unnecessarily harm well-performing groups. The formal definition of proper Pareto optimality is offered in Definition \ref{def:properly_pareto}. 
\begin{definition}[Proper Pareto optimality]\label{def:properly_pareto}
    A hypothesis $h^* \in \mathcal{H}$ is properly Pareto optimal if for any possible sensitive group $g \in \mathcal{G}$,   $\nexists h \in \mathcal{H}: 
    \mathop{\E}\limits_{Z|g}[\ell(h;Z)]\leq \mathop{\E}\limits_{Z|g}[\ell(h^*;Z)]$, and $\exists g' \in \mathcal{G}: \mathop{\E}\limits_{Z|g'}[\ell(h;Z)]\break<\mathop{\E}\limits_{Z|g'}[\ell(h^*;Z)]$.  
\end{definition} 
By nature, and similar to DRO and BPF, RCVaR guarantees that no group partition that encompasses more than $\rho$ of the total population, pre-defined or not, will experience a worse performance than the one obtained by optimizing Eq. \ref{Fed_cvar_tradeoff}.

 \paragraph{Connections to CVaR and DRO.} For $\epsilon=0$, RCVaR is equivalent to CVaR in centralized learning settings, thereby allowing for minimax weakly Pareto optimal solutions. Also, CVaR is the dual formulation of DRO \cite{DRO} for specific uncertainty sets, as formally shown in Proposition 3 in \cite{DRO} and Lemma 2.1 in \cite{duchi}. Thus, RCVaR with  $\epsilon=0$ is the federated formulation of DRO \cite{DRO}. In contrast to standard CVaR and DRO, the additional utility term and trade-off parameter $\epsilon$ in Eq. \ref{Fed_cvar_tradeoff} enables a larger set of achievable solutions that are proper Pareto optimal, as we discuss in Section \ref{sec:experiments}.
There are also algorithmic differences between optimizing DRO and RCVaR that we describe in Section \ref{sec:optimization}.

\paragraph{Connections to BPF.} Setting $\epsilon \approx 0$, sufficiently small but non-zero, produces a hypothesis that is minimax (properly) Pareto fair, since it focuses on the worst-performing samples, while still utilizing the remaining samples with a small priority $\epsilon$. 
We argue that for such $\epsilon$ value, the LHS of Eq. \ref{Fed_cvar_tradeoff} is a new expression for Pareto subgroup robustness suitable for centralized learning settings as well. We detail how RCVaR relates to BPF in Appendix \ref{appdx:BPF_connections}. 
Due to the connection of our objective with BPF, we argue that our objective inherits BPF's properties presented in \cite{pmlr-v139-martinez21a}, including the fact that there exists a critical partition size $\rho$ that leads to the uniform classifier for sufficiently small $\epsilon$ values.

\paragraph{Connections to ERM and FedAvg.} If $\epsilon = 1$, our objective reduces to the vanilla-ERM objective in \cite{fedavg}. For any other intermediate value of $\epsilon \in (0,1)$, we obtain a trade-off between utility and subgroup robustness. 
{To better understand the set of trade-offs achieved by RCVaR, we offer an illustrative example in Figure \ref{fig:trade_offs_xaxis_rho}. 
We emphasize that the value of $\epsilon$ is predefined and fixed, and therefore, we leave it to the policy maker(s) to determine the fairness-utility compromise. An additional advantage of the objective in Eq. \ref{Fed_cvar_tradeoff} is that it can be easily federalized, as shown in the sequel.

\section{Optimization Method}\label{sec:optimization}
In real applications, each client holds only a finite dataset {$D_k=\{ z^k_i\}_{i=1}^{n_k}$, 
with $z^k_i=(x^k_i,y^k_i)$,} sampled from the true distribution ${p(Z|K=k)}$, with $D=\bigcup\limits_{k \in \mathcal{K}}D_k$ being the dataset containing all the data samples available across clients of size $n=\sum\limits_{k \in \mathcal{K}}n_k$. 
{Hence, in the sequel we use the empirical form of RCVaR given by
\begin{equation}\label{empirical_Fed_cvar_tradeoff}
    \min\limits_{\bm \theta \in \Theta, c \in [0, B]} \sum\limits_{k \in \mathcal{K}}\frac{1}{n}\sum\limits_{i =1}^{n_k} f(\bm \theta,c;z^k_i),
\end{equation}
where 
\begin{equation}\label{auxiliary_function}
    f(\bm \theta,c;z)= (1-\epsilon)[c + \frac{1}{\rho}(\ell(\bm \theta;z) - c)_+] + {\epsilon} \ell(\bm \theta;z),
\end{equation}
and $\bm \theta \in \Theta$ is a vector that parametrizes the hypothesis $h \in \mathcal{H}$, and correspondingly change to the notation $\ell(\theta;z)$ instead of $\ell(h;z)$. We next offer a federated learning algorithm to solve Eq. \ref{empirical_Fed_cvar_tradeoff} that relies on a smoothed version $\Tilde{f}(\cdot)$ of the non-smooth function $f(\cdot)$.
 
 In our federated learning setting (a) every client uses a batch size $b_k\leq n_k$ of data samples at each training iteration, (b) each client might use each local data sample more than once during the training, and (c) there are $T$ communications between the clients and server. 
This realistic setting makes our algorithmic design and analysis challenging since -- in order to develop a simple algorithm with strong theoretical guarantees -- we need an objective that is continuously differentiable for all $z$. 
Unfortunately, even for smooth loss functions $\ell$, the $f$ in our current objective is non-smooth due to the plus function $(\cdot)_+$. To overcome this issue we consider a proxy problem of the RCVaR in Eq. \ref{empirical_Fed_cvar_tradeoff}, which relies on a smooth approximation.

\subsection{Smooth Approximation of RCVaR }

We consider the family of smoothed plus functions that satisfy  Definition \ref{def:smoothed_plus_function}.

\begin{definition}[Smooth Approximation \cite{smooth_plus_func}]\label{def:smoothed_plus_function} For a smoothing parameter $\gamma \in \R_+$ and for any $m \in \R$, a $(\frac{2}{\gamma})-$smooth convex 
function $s:\R\to\R_+$ approximates a plus function $(\cdot)_+$, if it satisfies  
$0\leq s(\cdot)-(\cdot)_+\leq  {\gamma}$.
\end{definition}

A smooth plus function $s(\cdot)$ becomes a more accurate approximation of the plus function, for small values of $\gamma$, as discussed in Section \ref{sec:analysis}.
The designed algorithm and its analysis support any function that is consistent with Definition \ref{def:smoothed_plus_function} (e.g., soft ReLU \cite{Peng1999}, Zang smooth plus function \cite{zang}, piecewise quadratic smoothed plus function \cite{quadratic}), rather than a specific smoothed plus function. The empirical smooth approximation of RCVaR is formulated as 
\begin{align}\label{smooth_empirical_Fed_cvar_tradeoff}
\begin{array}{cc}
    \min\limits_{\bm \theta \in \Theta, c \in [0, B]} \frac{1}{n} \sum\limits_{k \in \mathcal{K}}\sum\limits_{i =1}^{n_k} \Tilde{f}(\bm \theta,c;z^k_i)
\end{array}
\end{align}
where
\begin{equation*}
    \Tilde{f}(\bm \theta,c;z)=(1-\epsilon)[c + \frac{1}{\rho}s(\ell(\bm \theta;z^k_i) - c)] + {\epsilon} \ell(\bm \theta;z^k_i).
\end{equation*}  
 
\subsection{FedSRCVaR: Federated Smoothed RCVaR Algorithm}

Next, we introduce a federated learning algorithm designed to address Eq. \ref{smooth_empirical_Fed_cvar_tradeoff}, namely FedSRCVaR. The algorithm is outlined in Algorithm \ref{alg:fedcvar}. 

Our algorithm performs the following successive steps for $T$ communication rounds: \textbf{(a)} the clients receive the global model-threshold pair $(\bm \theta^t,c^t)$ of the current round from the server; \textbf{(b)} The clients perform $\tau$ local updates on the model parameters and the threshold using $b_k-$samples; \textbf{(c)} The clients return the updated pair $(\bm \theta_k^{t+1},c_k^{t +1})$ to the server; \textbf{(d)} Finally, the server produces the new model-threshold pair $\bm (\bm\theta^{t+1},c^{t+1})$ by averaging the received client updates. 

We denote $\textit{proj}_{[0,B]}$ the metric projection operator onto the set $[0,B]$ to ensure that the threshold $c$ has a valid value within the specified range. The server averages the clients updates using the relative weights $\frac{b_k}{\sum_{k \in \mathcal{K}}b_k}$, with $b_k$ being the batch size of client $k$ which is proportional to $n_k$ to allow clients to use a fraction of their local dataset, accommodating constraints such as computation limitations on the client side.
The algorithm outputs the average model-threshold pair over the total communications $( \overline{\bm \theta}_{T},\overline{c}_{T})$ that is produced after $|\mathcal{K}|\tau T$ total updates. 

\begin{algorithm}
    \caption{FedSRCVaR Algorithm}
    \label{alg:fedcvar}
    { \textbf{Inputs: }} $\mathcal{K}$: set of clients, $T$: communication rounds, $\tau$: local rounds, $\eta$: learning rate for model $\theta$ and quantile $c$, $\epsilon \in (0,1]$: trade-off parameter, $\rho \in (0,1)$: parameter for probability-level, $b_{k} $: local batch size.
      
    \begin{algorithmic}[1]
     
    \STATE Server initializes $\bm \theta^1$ randomly and   sets $c^1=B=1$. 

    \FOR{$t=1$ to $T$}
    
    \STATE Server \textbf{broadcasts} model-threshold $(\bm \theta^{t},c^{t})$

    \FOR{each client $k \in \mathcal{K}$ \textbf{ in parallel}}
    
     \STATE Randomly sample a data batch of size $b_{k} $ 
    \FOR{$j=1$ to $\tau$}
     
    \STATE Set $({\bm \theta}^{t,j=1},c^{t,j=1})=({\bm \theta}^{t},c^{t})$
    
     { \STATE  $\bm \theta^{t,j+1}_k \leftarrow \bm \theta^{t,j} - \eta  \nabla_{\theta}
     \bigg \{ 
     \sum\limits_{i=1}^{b_{k} } 
     \frac{\Tilde{f}(\bm \theta^{t,j},c^{t,j};z_i^k) }{b_{k} }
     \bigg \}
     $ }, 
     {  $c^{t,j+1}_k \leftarrow c^{t,j } - \eta  \nabla_{c } 
     \bigg 
     \{ 
     \sum\limits_{i=1}^{b_{k} } \frac{ \Tilde{f}(\bm \theta^{t,j},c^{t,j};z_i^k)}{b_{k} }  \bigg 
     \}    
     $ }
       
    \STATE Return local pair $(\bm \theta^{t,\tau}_k ,c^{t,\tau}_k )$ 
    \ENDFOR 
    \ENDFOR         
     {  \STATE Server computes  ${\bm \theta}^{t+1} \leftarrow \sum\limits_{k \in \mathcal{K}} \frac{b_k{{\bm \theta}^{t,\tau}_k}}{\sum\limits_{k \in \mathcal{K}}b_k}  $, ${c} ^{t+1}  \leftarrow \prod\limits_{c \in [0, B]} \bigg (\sum\limits_{k \in \mathcal{K}} \frac{b_k{c^{t,\tau}_k}}{\sum\limits_{k \in \mathcal{K}}b_k}\bigg )  $    }
    
     \ENDFOR
    
    \end{algorithmic}
    \textbf{Outputs: } {$ \overline{\bm\theta}_T=\frac{1}{T}\sum\limits_{t \in [T]}\bm \theta^t $  and  $ \overline{c}_T=\frac{1}{T}\sum\limits_{t \in [T]}c^t $}  
\end{algorithm}

\paragraph{Comparison to BPF and DRO methods:} All methods use parameter $\rho$ in their design. In addition to the flexibility of FedSRCVaR offered by the $\epsilon$ parameter discussed in Section \ref{sec:prob_form}, a key distinction between FedSRCVaR and DRO \cite{DRO} lies in their threshold learning approaches. Our algorithm employs (distributed) projected gradient descent with periodic averaging, while DRO relies on a (centralized) binary search method. 
Furthermore, FedSRCVaR can easily be deployed in dynamic learning settings, such as online learning settings, where the global model is trained using a continuous stream of new data arriving sequentially in real-time. BPF requires estimating and optimising per-sample adversarial weights at each optimization round, managing and accessing the last risk evaluation for every sample and adjusting the set from which adversarial weights are selected become computationally expensive.
Finally, FedSRCVaR is lightweight even for $\tau=1$, since it requires only the exchange of the updated model-threshold pair between clients and the server, which makes its communication overhead insignificant compared to the communication costs and additional privacy concerns that are required for the federalization of BPF. 
We share more information about this comparison in Appendix \ref{appdx:BPF_federalization}.

\section{Algorithmic Analysis}\label{sec:analysis}

We now examine the performance of Algorithm \ref{alg:fedcvar} by assessing the associated convergence rate and expected excess risk. 
Our analysis relies on the following assumptions.

\begin{assumption}\label{ass:loss_function}
The loss function $\ell(\bm \theta,z)$ is convex wrt $z$, $G-$Lipschitz, and $\beta-$smooth function of range $[0,B]$, with $B=1$, for all $z$ and $\bm \theta \in \Theta$.
\end{assumption} 
\begin{assumption}\label{ass:convex_sets}
The set $\Theta \subseteq \R^d$ is convex with $||\bm  \theta- \bm \theta'||\leq M$, for any $ \bm \theta, \bm \theta' \in \Theta$.
\end{assumption}

\begin{assumption}\label{ass:bounded_variance} 
For any model-threshold pair $({\bm \theta},c) \in \Theta\times[0,B]$, each client $k \in \mathcal{K}$ can query an unbiased stochastic gradient, i.e., $\E\big[\nabla \{\sum\limits_{i =1}^{b} \frac{1}{b}\Tilde{f}(\bm {\bm \theta},c;z^k_i)\}\big]=\nabla  \mathop{\E}\limits_{ Z|k }\big[\Tilde{f}(\bm {\bm \theta},c;Z)\big] $, with $\sigma^2$ -- uniformly bounded variance, i.e.,
\begin{equation*}
\begin{array}{c}
\E \bigg[ \bigg|\bigg| \frac{1}{b} \nabla \{\sum\limits_{i =1}^{b }\Tilde{f}(\bm {\bm \theta},c;z^k_i)\}  - \nabla  \mathop{\E}\limits_{ Z|k }\big[\Tilde{f}(\bm {\bm \theta},c;Z)\big]  \bigg|\bigg|^2 \bigg]\leq \sigma^2.
\end{array}
\end{equation*}
\end{assumption}
\begin{assumption}\label{ass:bounded_local_global_grads} 
The difference between local and global gradients is $\mu-$uniformly bounded, meaning that 
\begin{equation}
\begin{array}{c}
    \max_{k \in \mathcal{K}}\sup\limits_{({\bm \theta},c) \in \Theta \times [0,B]}  \bigg|\bigg| \nabla \mathop{\E}\limits_{ Z|k }\big[\Tilde{f}(\bm {\bm \theta},c;Z)\big] - \nabla \mathop{\E}\limits_{ Z }\big[\Tilde{f}(\bm {\bm \theta},c;Z)\big]  \bigg|\bigg| \leq \mu.
\end{array}
\end{equation}
\end{assumption}

The definitions of these properties are provided in Appendix \ref{appdx:definitions}. 
Under these assumptions, we can establish the core properties of the smooth and non-smooth functions, $f$ and $\Tilde{f}$, required for our analysis, in Lemma \ref{lemma:properties}.
The proof is provided in Appendix
\ref{appdx:smooth_approx}. 

\begin{lemma}\label{lemma:properties} Let Assumption \ref{ass:loss_function} hold. Let also $s:\R\to\R_+$ be a $\frac{2}{\gamma}-$smooth convex function. Then,
\begin{enumerate}
    \item The functions $f $ and $\Tilde{f} $ are convex for every $z$.
    \item The function $f $ and the smoothed function $\Tilde{f} $ are $G_{\rho,\epsilon}-$ Lipschitz for all $z$ with 
    \begin{equation*}
    \begin{array}{c}
        G_{\rho,\epsilon} =\max \bigg\{\frac{1}{\rho}\sqrt{{G^2(1-\epsilon +\epsilon \rho)^2 + (1-\epsilon)^2({\rho}-1)^2 }},\sqrt{G^2  \epsilon^2 + (1-\epsilon)^2} \bigg\}.
        \end{array}
    \end{equation*}
    \item The function $\Tilde{f}$ is $ \big(\frac{(1-\epsilon)}{\rho}(\beta + \frac{2}{\gamma} G^2) + \epsilon \beta  \big) -$smooth.
    \item For any model $\bm \theta \in \Theta$ we have that 
    \begin{equation}\label{ineq_functions}
    \begin{array}{c}
        f(\bm \theta,c;z)\leq \Tilde{f}(\bm \theta,c;z)\leq f(\bm \theta,c;z) + \frac{(1-\epsilon)}{\rho}{\gamma}.
        \end{array}
    \end{equation}
\end{enumerate}
\end{lemma}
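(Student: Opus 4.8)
The plan is to first distill from Definition \ref{def:smoothed_plus_function} the pointwise properties of an admissible smooth plus function $s$ that drive all four claims, and then to treat the claims essentially term-by-term, exploiting that $f$ and $\tilde f$ are both nonnegative-weight affine combinations of $c$, $\ell(\bm\theta;z)$, and the composition $(\ell(\bm\theta;z)-c)_+$, respectively $s(\ell(\bm\theta;z)-c)$. The preliminary step is to record three facts about $s$. Convexity is assumed. From $0\le s(m)-m_+\le\gamma$ together with convexity one gets that $s$ is \emph{non-decreasing} with $0\le s'\le 1$: if some subgradient were negative, convexity would force $s(m)\to+\infty$ as $m\to-\infty$, contradicting $s\le\gamma$ on $(-\infty,0]$; and $s(m)\le m+\gamma$ combined with convexity forces slopes $\le 1$ by letting the comparison point run to $+\infty$. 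Finally, $\tfrac{2}{\gamma}$-smoothness gives $0\le s''\le\tfrac{2}{\gamma}$ (i.e.\ $s'$ is $\tfrac{2}{\gamma}$-Lipschitz). These are exactly the quantities that appear in the constants.

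Claims 1 and 4 are then short. For convexity (Claim 1), note $\ell(\bm\theta;z)-c$ is convex in $(\bm\theta,c)$ (convex in $\bm\theta$ by Assumption \ref{ass:loss_function}, affine in $c$); since $(\cdot)_+$ and $s$ are convex and non-decreasing, the compositions are convex, and a nonnegative combination of convex functions with the affine term $c$ is convex. Claim 4 is a one-line consequence of Definition \ref{def:smoothed_plus_function}: subtracting, $\tilde f-f=\tfrac{1-\epsilon}{\rho}\big(s(\ell-c)-(\ell-c)_+\big)$, and $0\le s(\ell-c)-(\ell-c)_+\le\gamma$ gives $0\le\tilde f-f\le\tfrac{1-\epsilon}{\rho}\gamma$. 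For the Lipschitz claim (Claim 2) I would compute the (sub)gradient with respect to $(\bm\theta,c)$ and introduce a single derivative factor $u$ for the outer function ($u\in\{0,1\}$ for $(\cdot)_+$ away from the kink, any $u\in[0,1]$ in its subdifferential at the kink; $u=s'\in[0,1]$ for $\tilde f$). Both gradients then share the form $\nabla_{\bm\theta}=\big(\tfrac{(1-\epsilon)u}{\rho}+\epsilon\big)\nabla_{\bm\theta}\ell$ and $\partial_c=(1-\epsilon)\big(1-\tfrac{u}{\rho}\big)$, so the squared gradient norm, after bounding $\|\nabla_{\bm\theta}\ell\|\le G$, becomes $A(u):=\big(\tfrac{(1-\epsilon)u}{\rho}+\epsilon\big)^2 G^2+(1-\epsilon)^2\big(1-\tfrac{u}{\rho}\big)^2$. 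This is a convex quadratic in $u$ (leading coefficient $\tfrac{(1-\epsilon)^2}{\rho^2}(G^2+1)\ge 0$), so its maximum over $[0,1]$ is attained at an endpoint; evaluating at $u=1$ and $u=0$ reproduces exactly the two terms inside the $\max$ defining $G_{\rho,\epsilon}$, and since this covers both $f$ and $\tilde f$, they share the same Lipschitz constant.

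The smoothness claim (Claim 3) is the crux and the step I expect to be the main obstacle. Since $c$ enters linearly (zero curvature) and $\epsilon\ell$ is $\epsilon\beta$-smooth, it remains to bound the curvature of the composition $\psi(\bm\theta,c):=s(\ell(\bm\theta;z)-c)$, weighted by $\tfrac{1-\epsilon}{\rho}$. Writing $g=\ell(\bm\theta;z)-c$, the chain rule gives $\nabla^2_{\bm\theta}\psi=s''(g)\,\nabla_{\bm\theta}\ell\,\nabla_{\bm\theta}\ell^{\top}+s'(g)\,\nabla^2_{\bm\theta}\ell$, so by the triangle inequality for operator norms together with $s''\le\tfrac{2}{\gamma}$, $s'\le 1$, $\|\nabla_{\bm\theta}\ell\|\le G$ and $\|\nabla^2_{\bm\theta}\ell\|\le\beta$ we obtain $\|\nabla^2_{\bm\theta}\psi\|\le\tfrac{2}{\gamma}G^2+\beta$; scaling by $\tfrac{1-\epsilon}{\rho}$ and adding the $\epsilon\beta$ contribution reproduces the announced constant. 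The delicate part is precisely this composition bookkeeping — controlling the rank-one term $s''\nabla_{\bm\theta}\ell\,\nabla_{\bm\theta}\ell^{\top}$ \emph{uniformly} over all $(\bm\theta,c)$ and verifying that the two curvature contributions combine additively in operator norm — whereas everything else is a direct substitution of the constants from Assumption \ref{ass:loss_function} and Definition \ref{def:smoothed_plus_function}. An equivalent route, which avoids second derivatives, bounds gradient differences directly by adding and subtracting $s'(g_1)\nabla_{\bm\theta}\ell(\bm\theta_2)$ and invoking that $g\mapsto s'(g)$ is $\tfrac{2}{\gamma}$-Lipschitz, which is exactly where the factors $\beta$ and $\tfrac{2}{\gamma}G^2$ surface.
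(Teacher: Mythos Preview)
Your proposal is correct and tracks the paper's proof closely. Parts 1, 2, and 4 are essentially identical; your unified parametrization of the (sub)gradient via a single $u\in[0,1]$ in Part~2 is a slightly cleaner packaging of the paper's explicit case split on $\ell(\bm\theta;z)\gtrless c$, but the endpoint maximization and the resulting constant are the same. For Part~3 the paper takes precisely your ``equivalent route'': it bounds the $\bm\theta$-gradient difference directly by adding and subtracting $s'(g_1)\nabla_{\bm\theta}\ell(\bm\theta_2;z)$ and then invokes the $\tfrac{2}{\gamma}$-Lipschitzness of $s'$ together with the $G$-Lipschitz and $\beta$-smooth properties of $\ell$, rather than the Hessian calculation you lead with. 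One caveat worth flagging for either approach: both your Hessian sketch and the paper's argument control only the $\bm\theta$-block of the gradient, whereas the $c$-derivative $(1-\epsilon)\big(1-\tfrac{1}{\rho}s'(\ell-c)\big)$ also varies with $(\bm\theta,c)$ through $s'$; a fully rigorous joint-smoothness bound should track that block too (it is handled by the same estimate $|s'(g_1)-s'(g_2)|\le\tfrac{2}{\gamma}|g_1-g_2|$, so the stated constant still suffices up to the same order).
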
 
\vspace{-.5\baselineskip}
We remark that Eq. \ref{ineq_functions} bounds the smooth function $\Tilde{f}$ using $f$, which allows us to express our guarantees in terms of $f$, but importantly prove them in terms of the more analytically tractable $\Tilde{f}$.

\subsection{Convergence of Algorithm \ref{alg:fedcvar} }\label{appdx:multiround_case}
We begin by characterizing the optimization error given by
 \begin{equation*}
 \begin{array}{c}
 \mathcal{E}_{opt}=  
\mathop{\E}\limits_{\mathcal{A},D}\bigg[\sum\limits_{z \in D}  \frac{{f}(\overline{\bm  \theta}_T,\overline{c}_T;z )}{n} \bigg] 
 -\mathop{\E}\limits_{D}\bigg[\sum\limits_{z \in D} \frac{{f}( \bm \theta^*_D,c^*_D;z)}{n} \bigg],
 \end{array}
\end{equation*}
where $(\overline{\bm  \theta}_T,\overline{c}_T)$ is the average model-threshold pair after $T$ rounds of Algorithm \ref{alg:fedcvar}, $( \bm \theta^*_D,c^*_D)$ is the model-threshold pair that minimizes the smoothed objective in Eq. \ref{smooth_empirical_Fed_cvar_tradeoff} and the outer expectation in the first term is taken over the randomness induced by our randomized algorithm $\mathcal{A}$ and the samples $D$, and in the second term with respect to the dataset $D$. {This error captures how well the produced pair $(\overline{ \bm \theta}_T,\overline{c}_T)$ approximates the optimal empirical pair $( \bm \theta^*_D,c^*_D)$ in terms of the true (non-smooth) objective function.}

The next lemma offers a bound to the optimization error $\mathcal{E}_{opt}$. The proof -- detailed in Appendix \ref{appdx:convergence} -- leverages results for local-update gradient-based algorithms presented in Theorem 1 in \cite{advancesFL2}.

\begin{lemma}[Convergence of FedSRCVaR]\label{lemma:multi_fedcvar_convergence}
Let the assumptions \ref{ass:loss_function} -  
\ref{ass:bounded_local_global_grads} hold, $(\bm \theta_D^\star,c_D^\star)$ be the minimizer of Eq. \ref{smooth_empirical_Fed_cvar_tradeoff}, 
and a learning rate  \vspace{-0.5\baselineskip}
\begin{multline}\label{eq:lr_condition}
        \eta=   \min \bigg\{\frac{\sqrt{|\mathcal{K}|}\sqrt{M^2+1^2}}{\sigma\sqrt{\tau T} },  \bigg (\frac{M^2+1^2}{\sigma^2  \tau^2 \big(\frac{(1-\epsilon)}{\rho}(\beta + \frac{2}{\gamma} G^2) + \epsilon \beta  \big)  T}\bigg )^{\frac{1}{3}},
         \\
        \frac{1}{4 \big(\frac{(1-\epsilon)}{\rho}(\beta + \frac{2}{\gamma} G^2) + \epsilon \beta  \big)  },  \frac{(M^2+1^2)^{\frac{1}{3}}}{\tau(\mu^2 \big(\frac{(1-\epsilon)}{\rho}(\beta + \frac{2}{\gamma} G^2) + \epsilon \beta  \big)  T)^{\frac{1}{3}}}\bigg \}
\end{multline}
\vspace{-0.5\baselineskip}
 Then, for the model-threshold pair $(\overline{{\bm \theta}}_T,\overline{c}_T)$ provided by Algorithm \ref{alg:fedcvar} after $T$ rounds, we have 
\begin{multline}\label{eq:e_op_bound}
 \mathcal{E}_{opt}    
 \leq     \frac{2\big(\frac{(1-\epsilon)}{\rho}(\beta + \frac{2}{\gamma} G^2) + \epsilon \beta  \big) ({M^2 +B^2})}{\tau T} 
        + \frac{2\sigma \sqrt{{M^2 +B^2}}}{\sqrt{|\mathcal{K}|  \tau T}} 
        + \frac{(1-\epsilon)\gamma}{\rho} 
         \\  
     +  \bigg ( \frac{\big(\frac{(1-\epsilon)}{\rho}(\beta + \frac{2}{\gamma} G^2) + \epsilon \beta  \big)  ({M^2 +B^2})^2}{ T^2} \bigg)^\frac{1}{3} \bigg( 5(\frac{\sigma^2}{\tau})^{\frac{1}{3}}  + 19 { \mu^{\frac{2}{3}} } \bigg)
\end{multline} 
\end{lemma}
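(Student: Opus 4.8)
The plan is to split $\mathcal{E}_{opt}$ into a deterministic smoothing gap and a stochastic optimization gap, then discharge the latter with an off-the-shelf local-update SGD convergence bound. Write $F(\bm\theta,c):=\frac{1}{n}\sum_{z\in D} f(\bm\theta,c;z)$ and $\Tilde{F}(\bm\theta,c):=\frac{1}{n}\sum_{z\in D} \Tilde{f}(\bm\theta,c;z)$ for the empirical non-smooth and smooth objectives, and set $L:=\frac{(1-\epsilon)}{\rho}(\beta + \frac{2}{\gamma}G^2) + \epsilon\beta$ for the smoothness constant of $\Tilde{f}$ from Lemma \ref{lemma:properties}. The first move is to pass from $f$ to $\Tilde{f}$. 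The fourth claim of Lemma \ref{lemma:properties} gives $f\le \Tilde{f}\le f + \frac{(1-\epsilon)}{\rho}\gamma$ pointwise, so averaging over $D$ yields $F(\overline{\bm\theta}_T,\overline{c}_T)\le \Tilde{F}(\overline{\bm\theta}_T,\overline{c}_T)$ and $F(\bm\theta^*_D,c^*_D)\ge \Tilde{F}(\bm\theta^*_D,c^*_D)-\frac{(1-\epsilon)}{\rho}\gamma$. Chaining these and taking expectations,
\[
\mathcal{E}_{opt} \le \E_{\mathcal{A},D}\big[\Tilde{F}(\overline{\bm\theta}_T,\overline{c}_T)\big] - \E_D\big[\Tilde{F}(\bm\theta^*_D,c^*_D)\big] + \frac{(1-\epsilon)\gamma}{\rho},
\]
which already isolates the third term of Eq. \ref{eq:e_op_bound}. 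Since $(\bm\theta^*_D,c^*_D)$ minimizes $\Tilde{F}$, the leading difference is exactly the expected suboptimality of the algorithm's averaged iterate on the smooth objective.

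The remaining step is to bound that smooth suboptimality using the convergence guarantee for local-update gradient methods, Theorem 1 in \cite{advancesFL2}. I would verify its hypotheses one by one from the material already established: $\Tilde{F}$ is convex by Lemma \ref{lemma:properties} (first claim) and $L$-smooth by its third claim; the per-client batch gradients are unbiased with variance at most $\sigma^2$ by Assumption \ref{ass:bounded_variance}; the client-to-server gradient dissimilarity is bounded by $\mu$ by Assumption \ref{ass:bounded_local_global_grads}; and the joint feasible set $\Theta\times[0,B]$ is convex with squared diameter $M^2+B^2$, combining Assumption \ref{ass:convex_sets} with $c\in[0,B]$ (kept feasible by the projection step of Algorithm \ref{alg:fedcvar}). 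Instantiating the theorem for $T$ communication rounds, $\tau$ local steps, and $|\mathcal{K}|$ clients with the learning rate of Eq. \ref{eq:lr_condition}—which is tuned precisely to balance the three error contributions—produces the $\frac{2L(M^2+B^2)}{\tau T}$, $\frac{2\sigma\sqrt{M^2+B^2}}{\sqrt{|\mathcal{K}|\tau T}}$, and $\big(\frac{L(M^2+B^2)^2}{T^2}\big)^{1/3}\big(5(\sigma^2/\tau)^{1/3}+19\mu^{2/3}\big)$ terms. Substituting the explicit form of $L$ back in and adding the smoothing term $\frac{(1-\epsilon)\gamma}{\rho}$ reproduces Eq. \ref{eq:e_op_bound}.

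The main obstacle I anticipate is the faithful transfer to the cited theorem, and two points in particular need care. First, Algorithm \ref{alg:fedcvar} applies the metric projection onto $[0,B]$ only at the server-aggregation step and not inside each local update, so $c$ may temporarily leave $[0,B]$ during the $\tau$ inner iterations; I must confirm that the cited analysis (or a projected variant of it) still applies to the $c$-coordinate without degrading the stated rate, e.g. by using the boundedness and Lipschitz control of $\Tilde f$ in $c$ to absorb the excursion. Second, reproducing the exact numerical constants (the factors $2$, $5$, and $19$) and matching the four-way minimum in the learning-rate schedule of Eq. \ref{eq:lr_condition} to the precise form in which Theorem 1 of \cite{advancesFL2} is phrased is mostly bookkeeping but must be done carefully so that the constants line up. The convexity, smoothness, variance, and heterogeneity inputs are all furnished directly by Lemma \ref{lemma:properties} and Assumptions \ref{ass:bounded_variance}–\ref{ass:bounded_local_global_grads}, so no new analytic machinery beyond this reduction should be required.
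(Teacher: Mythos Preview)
Your proposal is correct and follows essentially the same route as the paper: sandwich $f$ by $\Tilde{f}$ via property 4 of Lemma \ref{lemma:properties} to isolate the $\frac{(1-\epsilon)\gamma}{\rho}$ term, then invoke Theorem 1 of \cite{advancesFL2} on the smooth objective $\Tilde{F}$ with the substitutions $\beta\mapsto L$, $M\mapsto\sqrt{M^2+B^2}$, and the given $\sigma,\mu$. The paper's proof does not address the projection-only-at-aggregation issue you flag either, so your caution there is well placed but not a divergence from the published argument.
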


\paragraph{Interpretation of Lemma \ref{lemma:multi_fedcvar_convergence}} 

For $\tau=1$, our algorithm finds a model-threshold pair $(\overline{\bm \theta}_T,\overline{c}_T)$ after $T$ communication rounds that guarantees an optimization error of order $O\bigg(\frac{1}{\tau T}+\frac{\sigma}{\sqrt{| \mathcal{K}|\tau T}} + \frac{(1-\epsilon)\gamma}{\rho} \bigg)$. The first term corresponds to the deterministic convergence and the second term refers to the standard statistical noise term encountered by any algorithm that uses $|\mathcal{K}|\tau T$ total stochastic gradients. The third term depends on how accurately the smooth plus function approximates the plus function. When $\gamma$ is sufficiently small, we recover the upper bound for synchronous SGD \cite{advancesFL2}. For $\tau>1$, the last two terms in Eq. \ref{eq:e_op_bound} appear, leading to an optimization error diminishing at a rate of $O\big( T^{-\frac{2}{3}}\big)$.

The guarantees also establish that in the presence of high data heterogeneity (i.e., $\mu \gtrsim \sigma $), the maximum number of local steps we can perform is $\tau=O(|\mathcal{K}|^{-1}(|\mathcal{K}|\tau T)^{\frac{1}{4}})$.  
When there is no heterogeneity, the local rounds increase to $\tau=O(|\mathcal{K}|^{-2} (|\mathcal{K}|\tau T)^{\frac{1}{2}})$. Therefore, for an appropriate selection of local rounds we can handle the error induced by the data heterogeneity across clients when we adopt multiple rounds in \emph{lieu} of a single round per client.

\subsection{Expected Excess Risk} 
Next, we characterize the excess risk, given by  
\begin{equation*}
    \mathcal{E}_{r}=  \mathop{\E}\limits_{\mathcal{A},D}\bigg[\mathop{\E}\limits_{K}\big[\mathop{\E}\limits_{Z|K=k}[{f}(\overline{  \bm \theta}_T,\overline{c}_T;Z)]\big]\bigg]  -
\mathop{\E}\limits_{D}\bigg[\sum\limits_{z \in D} \frac{{f}( \bm \theta^*_D,c^*_D;z)}{n} \bigg], 
\end{equation*}
where $(\overline{ \bm \theta}_T,\overline{c}_T)$ is the average model-threshold pair given by Algorithm \ref{alg:fedcvar} using dataset $D$ and $(\bm \theta^*_D,c^*_D)$ is the optimal solutions pair that minimizes the smoothed empirical objective in Eq. \ref{smooth_empirical_Fed_cvar_tradeoff} for the given dataset $D$. The outer expectation of the first term is taken over the randomness induced by our algorithm $\mathcal{A}$ and of samples $D$, and in the second term with respect to the samples $D$. {The excess risk measures the difference between the expected population risk computed using the produced $(\overline{ \bm \theta}_T,\overline{c}_T)$ and the expected minimum empirical risk given by the empirical optimal pair $( \bm \theta^*_D,c^*_D)$.}

The following lemma -- which relies on the excess risk analysis for stochastic gradient methods in \cite{pmlr-v48-hardt16} -- offers a characterization of $\mathcal{E}_{r}$. 
The proof is available in Appendix \ref{appdx:excess_risk}.  
\begin{lemma}
[Excess Risk Analysis]\label{lemma:excess_risk}
Let assumptions \ref{ass:loss_function} and \ref{ass:convex_sets} hold. Let also the learning rate $\eta =\sqrt{n\big ( {\sum\limits_{k \in \mathcal{K}} b_k} \big)}\frac{\sqrt{M^2 +B^2}}{G_{\rho,\epsilon}\sqrt{T(n+2T)}}$, $\tau=1$, and $\gamma= \frac{2 G_{\rho,\epsilon}^2}{(1-\epsilon +\epsilon \rho)^2}\eta$. Then, for $T$ communication rounds of Algorithm \ref{alg:fedcvar} that satisfy
\vspace{-0.1in}
\begin{equation*}\label{rounds_bound}
   n \bigg ( {\sum\limits_{k \in \mathcal{K}} b_k} \bigg)\big (M^2 +B^2\big ) \bigg (\frac{\beta(1+\epsilon\rho) }{\rho G_{\rho,\epsilon}}\bigg )^2\leq {T(n+2T)},
\end{equation*} 
we have that
\begin{equation*}    
\mathcal{E}_{r}\leq  \frac{G_{\rho,\epsilon}  \sqrt{ {(M^2 +B^2)\big(\frac{2}{n}+\frac{1}{T}\big)} }  }{\sqrt{  {\sum_{k \in \mathcal{K}} b_k}  }}  +\frac{(1-\epsilon)\gamma}{\rho}. 
\end{equation*}
\end{lemma}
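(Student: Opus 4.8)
The plan is to bound $\mathcal{E}_r$ through the classical decomposition into an optimization error and a generalization (stability) error, carried out on the smooth surrogate $\tilde f$ and transferred back to the non-smooth $f$ via the sandwich bound of Lemma~\ref{lemma:properties}. Write the smooth population and empirical risks $\tilde R(\bm\theta,c)=\E_{Z}[\tilde f(\bm\theta,c;Z)]$ and $\hat{\tilde R}_D(\bm\theta,c)=\frac1n\sum_{z\in D}\tilde f(\bm\theta,c;z)$. Because $(\bm\theta^*_D,c^*_D)$ minimizes the smoothed empirical objective in Eq.~\ref{smooth_empirical_Fed_cvar_tradeoff}, I would apply $f\le\tilde f$ to the population term of $\mathcal{E}_r$ and $\tilde f\le f+\frac{(1-\epsilon)}{\rho}\gamma$ (both from Eq.~\ref{ineq_functions}) to the empirical term; this replaces $f$ by $\tilde f$ everywhere and leaves exactly the additive constant $\frac{(1-\epsilon)\gamma}{\rho}$ that already appears in the statement. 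It then suffices to bound $\tilde{\mathcal E}_r=\E_{\mathcal A,D}[\tilde R(\overline{\bm\theta}_T,\overline c_T)]-\E_D[\hat{\tilde R}_D(\bm\theta^*_D,c^*_D)]$, which after adding and subtracting $\E[\hat{\tilde R}_D(\overline{\bm\theta}_T,\overline c_T)]$ splits into a generalization gap and an optimization gap.

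For the optimization gap $\E[\hat{\tilde R}_D(\overline{\bm\theta}_T,\overline c_T)-\hat{\tilde R}_D(\bm\theta^*_D,c^*_D)]$ I would use that, with $\tau=1$, the averaging step of Algorithm~\ref{alg:fedcvar} collapses the per-client updates into a single projected mini-batch gradient step on $\hat{\tilde R}_D$ in the joint variable $(\bm\theta,c)\in\Theta\times[0,B]$, whose mini-batch gradient is unbiased for $\nabla\hat{\tilde R}_D$ under the proportional batches $b_k\propto n_k$. Since $\tilde f$ is convex (Lemma~\ref{lemma:properties}, part~1) and $G_{\rho,\epsilon}$-Lipschitz (part~2) over a domain of squared diameter $M^2+B^2$ (Assumption~\ref{ass:convex_sets} together with $c\in[0,B]$), the standard averaged-iterate SGD guarantee — exactly as in the derivation of Lemma~\ref{lemma:multi_fedcvar_convergence} specialized to $\tau=1$ — bounds this gap by a term of order $\frac{M^2+B^2}{2\eta T}$. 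Substituting the prescribed $\eta$ shows this term equals half of the stated leading bound $\frac{G_{\rho,\epsilon}\sqrt{(M^2+B^2)(2/n+1/T)}}{\sqrt{\sum_k b_k}}$, since the factor $\sqrt{n(\sum_k b_k)/(T(n+2T))}$ inside $\eta$ is engineered to produce precisely the combination $\sqrt{2/n+1/T}/\sqrt{\sum_k b_k}$.

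The generalization gap $\E[\tilde R(\overline{\bm\theta}_T,\overline c_T)-\hat{\tilde R}_D(\overline{\bm\theta}_T,\overline c_T)]$ I would control by uniform algorithmic stability in the sense of \cite{pmlr-v48-hardt16}. Here $\tilde f$ is convex and $\big(\frac{1-\epsilon}{\rho}(\beta+\frac2\gamma G^2)+\epsilon\beta\big)$-smooth (Lemma~\ref{lemma:properties}, parts~1 and~3) as well as $G_{\rho,\epsilon}$-Lipschitz, so the projected stochastic gradient iterates are non-expansive and uniformly stable, with the mini-batch averaging of the federated updates entering through $\sum_k b_k$; non-expansiveness requires the step size to satisfy $\eta\le 2/L_{\tilde f}$, where the metric projection onto $[0,B]$ is $1$-Lipschitz and hence harmless and iterate averaging only helps. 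The key observation is the coupling $\gamma=\frac{2G_{\rho,\epsilon}^2}{(1-\epsilon+\epsilon\rho)^2}\eta$: substituting it into $L_{\tilde f}$ makes $\eta L_{\tilde f}$ a bounded constant plus an $O(\eta\beta)$ term, so $\eta\le 2/L_{\tilde f}$ becomes an upper bound on $\eta$, and eliminating $\eta$ in favour of $T$ turns it into exactly the round-count hypothesis $n(\sum_k b_k)(M^2+B^2)\big(\frac{\beta(1+\epsilon\rho)}{\rho G_{\rho,\epsilon}}\big)^2\le T(n+2T)$.

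Finally I would add the two bounds and the smoothing bias: under the round-count condition the stability contribution is dominated by the optimization contribution, so their sum is at most $\frac{M^2+B^2}{\eta T}$, which with the prescribed $\eta$ equals $\frac{G_{\rho,\epsilon}\sqrt{(M^2+B^2)(2/n+1/T)}}{\sqrt{\sum_k b_k}}$, and adding $\frac{(1-\epsilon)\gamma}{\rho}$ yields the claim. I expect the main obstacle to be the stability step in the federated mini-batch setting: one must track how replacing a single one of the $n$ global samples propagates through the weighted server average and the per-client sampling — so that the effective sample size is the global $n$ (and the variance-reducing batch $\sum_k b_k$ appears) rather than a per-client count — and then verify that the resulting stability term is genuinely controlled by the optimization term precisely when the stated round-count condition holds. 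The remaining pieces, namely the convex-SGD optimization bound and the final algebraic substitution of $\eta$ and $\gamma$, are routine given Lemmas~\ref{lemma:properties} and~\ref{lemma:multi_fedcvar_convergence}.
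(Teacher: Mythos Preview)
Your proposal is correct and follows essentially the same route as the paper: sandwich $f$ by $\tilde f$ via Lemma~\ref{lemma:properties}, then invoke the convex stability/optimization trade-off of \cite{pmlr-v48-hardt16} on $\tilde f$ with the substitutions $G\to G_{\rho,\epsilon}$, $M^2\to M^2+B^2$, and the batch factor $\sum_k b_k$. The paper cites Proposition~5.4 of \cite{pmlr-v48-hardt16} as a single bound $\tfrac12\big(\tfrac{M^2+B^2}{\eta T}+\tfrac{G_{\rho,\epsilon}^2\eta}{n(\sum_k b_k)}(2T+n)\big)$ rather than splitting it, but that proposition is itself the optimization-plus-stability decomposition you write out, so the content is the same.

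One small mis-attribution to fix: the equality of the stability and optimization contributions (and hence their sum being $\tfrac{M^2+B^2}{\eta T}$) comes from the \emph{choice of $\eta$}, not from the round-count hypothesis. The round-count condition is used exactly where you say earlier, to guarantee $\eta\le 2/L_{\tilde f}$ for non-expansiveness; concretely the paper bounds $\tfrac{2}{L_{\tilde f}}\ge\min\{\tfrac{\rho}{\beta(1+\epsilon\rho)},\tfrac{\rho\gamma}{2G^2}\}$, the first branch yielding the round-count inequality after substituting $\eta$, and the second branch being satisfied automatically by the coupling $\gamma=\tfrac{2G_{\rho,\epsilon}^2}{(1-\epsilon+\epsilon\rho)^2}\eta$ together with $G_{\rho,\epsilon}\ge \tfrac{(1-\epsilon+\epsilon\rho)G}{\rho}$.
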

\vspace{-4pt}

\paragraph{Interpretation of Lemma \ref{lemma:excess_risk}:} The bound in Lemma \ref{lemma:excess_risk} indicates that, for a fixed step-size $\eta$ and for choices of $\gamma$ and $T$ that satisfy the conditions stated above, our algorithm produces a pair $(\overline{\bm  \theta}_T,\overline{c}_T)$ that yields an excess risk behaving as $O\bigg(\frac{1}{\sqrt{\sum_{k \in \mathcal{K}} b_k}}\sqrt{ \frac{2}{n}+\frac{1}{T} }\bigg)$.  
This bound shows how to effectively improve the overall performance by balancing the trade-off between optimization and generalization, since excess risk can be decomposed into a stability term\footnote{We use algorithmic stability (see Definition \ref{def:uniform_stability} ) to control the generalization error.} and an empirical optimization error term (see for example \cite{https://doi.org/10.48550/arxiv.1804.01619}). Thus, we can directly get from Lemma \ref{lemma:excess_risk} 
that FedSRCVaR has uniform stability of $\zeta\leq \frac{T G^2_{\rho,\epsilon}\eta}{n\sum\limits_{k \in \mathcal{K}}b_k}$. In contrast to the optimization error, the stability term scales with the communication rounds. 
{For $T=n$, the result of Lemma \ref{lemma:excess_risk} 
is of order $O\bigg(\frac{1}{\sqrt{{T\sum_{k \in \mathcal{K}} b_k}}}\bigg)$, and decreases with the square root of communication rounds times the total of batch size. Additionally, if we also have $T= \sum\limits_{k \in \mathcal{K}} b_k$ this quantity further improves and becomes $O\big(\frac{{1}}{T}\big)$. On the other hand, for $T\to \infty$, our bound is of order $O\bigg(\frac{1}{\sqrt{{n\sum_{k \in \mathcal{K}} b_k}}}\bigg)$, indicating that the excess risk scales down with square root of the number of data samples times the total batch size, meaning that we need a large number of client samples to reduce the excess risk. Moreover, if we also pick $n= \sum\limits_{k \in \mathcal{K}} b_k$ we can yield a bound that behaves as $O\big(\frac{{1}}{n}\big)$. We note, however, that $T\to \infty$ creates a communication bottleneck in federated learning systems, since there is a considerably large amount of messages that are exchanged between clients and server.

\section{Experimental Results}\label{sec:experiments}
We empirically demonstrate the advantages of the proposed approach on four datasets: (a) \textit{eICU} \cite{Pollard2018}, a dataset with records from various medical centres that we use to predict patient mortality. The data is distributed to $11$ clients and each client is mapped to a single hospital in the dataset. (b) \textit{ACS Employment} \cite{ding2021retiring} for employment classification based on $14$ input features. The data is assigned to $51$ clients based on geolocation.
(c) \textit{MNIST} \cite{deng2012mnist}, a grayscale image dataset that we use to classify $10$ handwritten digits where each digit is allocated to a client. (d) \textit{Celeb-A} \cite{liu2015faceattributes}, a dataset with facial images from celebrities. The target task is gender prediction and the data is randomly assigned to two clients.  

For all methods, the \textit{worst group} refers to the subset of test samples with losses higher than the $(1-\rho)$-quantile of the empirical test loss distribution. The \textit{best group} comprises the remaining test samples.
We measure \textit{utility/mean} as the average risk across all test samples and define the \textit{group risk disparity} as the risk difference between the best and worst groups. Further experimental details and additional experiments are provided in Appendix \ref{appdx:experiments}.
\subsection{Comparison to ML and FL Baselines}
To our knowledge, this is the first work that addresses fairness without access to sensitive groups in FL settings. Hence, we compare our approach with (a) the centralized ML fairness baselines DRO and BPF that aim to achieve fairness without relying on group information, but also with ERM that optimizes for utility; and (b) the FL approaches AFL \cite{AFL} which ensures client fairness, and FedAvg \cite{fedavg} which optimizes for utility, disregarding fairness. We note that the DRO and BPF are the hardest baselines to compare with, since they use a centralized dataset and can obtain the optimal solution. We report our results in Figure \ref{fig:FL_centralized_baselines}.

\begin{figure}
\centering\includegraphics[width=\columnwidth]{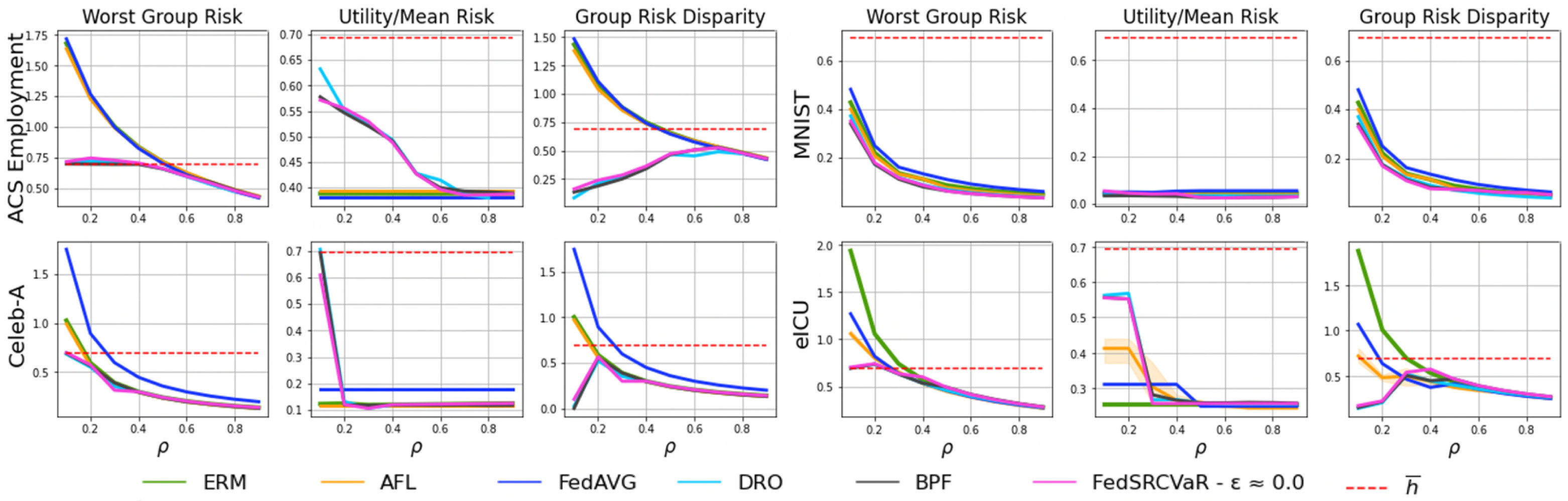}
    \vspace{-1.\baselineskip}
    \caption{Comparison of worst group risk, utility risk and group risk disparity between the best and worst groups on different datasets.
    {$\overline{h}$ denotes the uniform classifier.} FedRCVaR recovers solutions equivalent to centralized settings, while improving both utility and fairness compared to FL baselines in many settings. 
    $\rho$ is a hyperparameter of FedSRCVaR, DRO and BPF. Differences in average performance as a function of $\rho$ for ERM, FedAVG and AFL are due to the variation of the training hyperparameters, since for each $\rho$ we report the hyperparameter combination producing the model with the best performance for each method. 
    }\label{fig:FL_centralized_baselines} \vspace{-0.5\baselineskip}
\end{figure}

For $\epsilon\approx0$, FedRCVaR provides a model with the best performance on the worst group risk, along with BPF and DRO, confirming that our approach effectively produces minimax fair and robust solutions. In some cases, DRO exhibits a higher average risk compared to FedRCVaR and BPF, despite similar performances in the worst group risk, which indicates that DRO underperforms on the remaining population for these cases. 

AFL and FedAvg underperform on the worst group compared to our approach and perform better on the utility task for low $\rho$ values, as expected, since they put different and possibly higher weights on the low-risk samples than FedSRCVaR. We note that AFL maximizes performance over the worst client, while FedSRCVaR optimizes for the worst $\rho$-sized partition across all samples and clients in the federation. We also notice that FedRCVaR improves both worst group fairness and utility performance simultaneously in some datasets, outperforming AFL and FedAvg. This suggests that for small $\epsilon$, minimizing the right-tail risk of the samples is more effective and overall better in handling heterogeneity within the federation but also between training and testing sets. The results of FedSRCVaR and FedAVG vastly vary in most settings except for (a) high worst group size $\rho$ since the worst group will consist of most samples, and/or (b) high $\epsilon$ values for which the utility term has more importance than the fairness term.  For $\rho =1$ our method is theoretically equivalent to FedAvg.

When the produced models are examined on larger values worst group size $\rho$, the risk variance across all different approaches is low, as expected. As we discuss in Remark \ref{rem:critical_rho}, Appendix \ref{appdx:BPF_connections}, there is a critical partition size $\rho$ that leads to the uniform classifier $\overline{h}$ for sufficiently small $\epsilon$ values, which, in conjunction with the generalisation error, justifies the superior performance of $\overline{h}$ for small $\rho$s, as illustrated in our results.

\subsection{Achieving Various Trade-Offs through FedSRCVaR}
We empirically assess the trade-offs FedSRCVaR can accomplish for various combinations of $\epsilon\in \{0.01,0.1,\dots,0.9,1.0\} $ and $\rho\in \{0.1,\dots,0.9\}$ in Figure \ref{fig:trade_offs_xaxis_rho}. The different colours indicate a particular $\epsilon$ value and we report results for models that were trained individually for each pair of $(\epsilon,\rho)$ values. For ACS Employment we distribute the data to $3$ clients based on the race classes: \{Black, White, Others\}.

\begin{figure} [H]
    \centering 
    \includegraphics[width=0.77 \columnwidth]{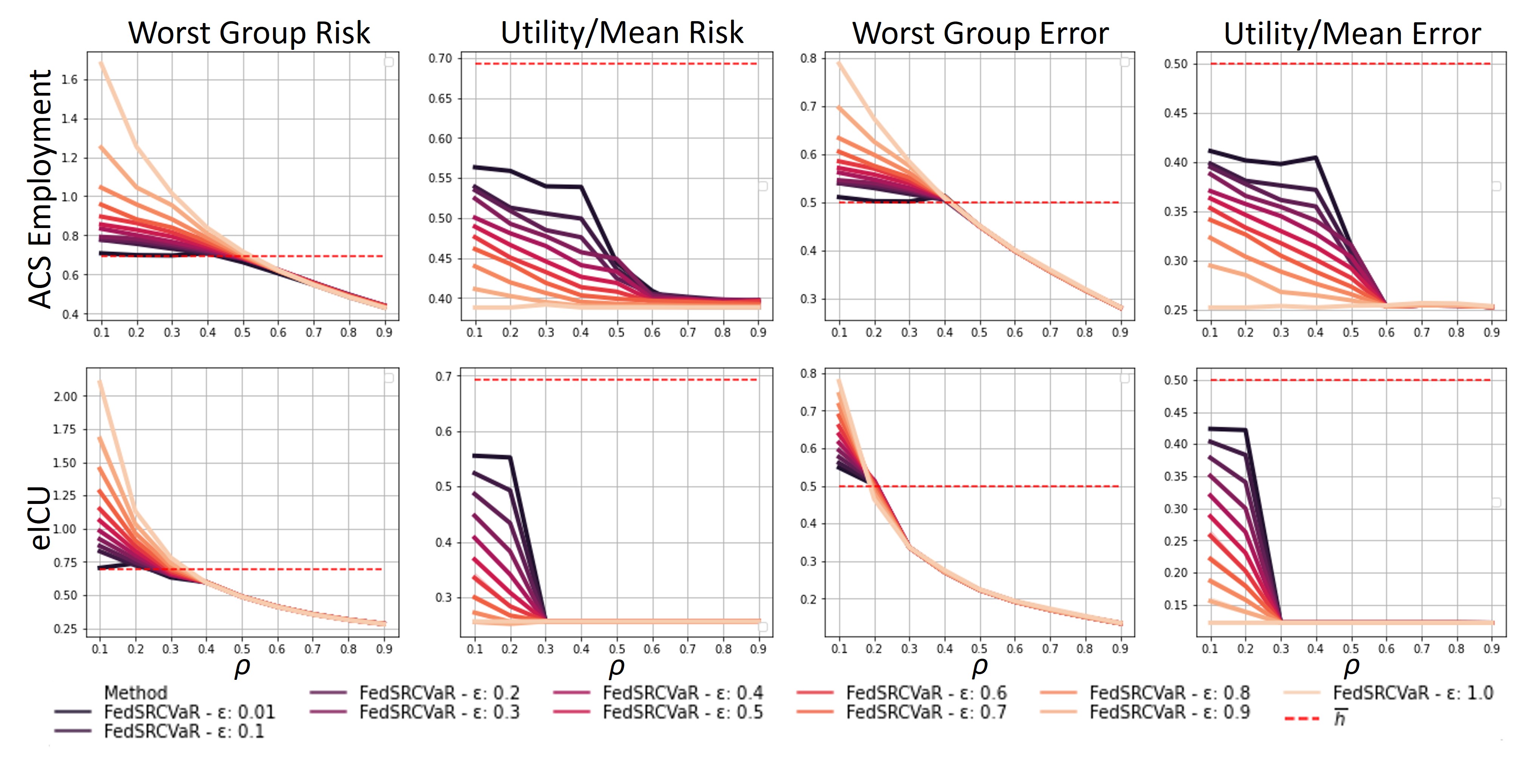}  
    \vspace{-1.1\baselineskip}
\caption{Performance trade-offs among worst group and utility for different pairs of $(\epsilon,\rho) \in \{0.01,0.1,\dots,0.9,1.0\}\times \{0.1,\dots,0.9\}$ values on real datasets. The different colours indicate different $\epsilon$ values. $\overline{h}$ denotes the uniform classifier. A lower score indicates better performance. We report the worst group and average/utility risks, as a function of $\rho$.}\label{fig:trade_offs_xaxis_rho} 
    \vspace{-0.5\baselineskip}
\end{figure}
Figure \ref{fig:trade_offs_xaxis_rho} shows that $\epsilon$ effectively acts as a tuning parameter between worst group fairness and average performance. For small $\rho$ values, $\epsilon$ has a significant impact on the worst group and the utility performance. We observe that the larger the $\epsilon$ the lower the average utility errors and risks, while as we decrease $\epsilon$ we boost the performance on worst group. Note that for $\epsilon \approx 0$ and $\rho \approx 0$, the worst-group risk is close to the uniform classifier risk which is consistent with Remark \ref{rem:critical_rho} in Appendix \ref{appdx:BPF_federalization}, and conclusions drawn about the existence of a critical worst-group size under which we yield the uniform classifier in \cite{pmlr-v139-martinez21a}. 
On the other hand, for large $\rho$s we notice that all solutions are equivalent and the parameter $\epsilon$ has almost no influence on the solution. Interestingly, for particular values of $\epsilon$ and $\rho$, FedSRCVaR can recover client robustness solutions (akin to AFL), even though our objective does not explicitly aim for that.

\subsection{FedSRCVaR with Multiple Local Rounds}
In Figure \ref{fig:multi_roundexps} we compare the performance of FedSRCVaR for $\tau \in \{1,5,10\}$, and FedAVG, on the ACS Employment and eICU datasets. 
For small $\rho$ values, FedSRCVaR for $\tau \in \{5,10\}$ exhibits a higher worst-group risk compared to the FedSRCVaR with $\tau=1$. This suggests that conducting multiple local rounds may result in inferior performance for the worst group when $\rho$ is small, as indicated in our convergence guarantees. Moreover, as $\rho$ increases, FedSRCVaR for $\tau \in \{5,10\}$ demonstrates improvement in worst-case fairness similar to FedSRCVaR with $\tau=1$. This implies that the impact of performing additional local epochs becomes less significant as the worst-group size becomes larger. For sufficiently large values of $\rho$, both methods converge to the same solution as FedAVG. 
 
\begin{figure} [H] 
\centering\includegraphics[width=0.74\columnwidth]{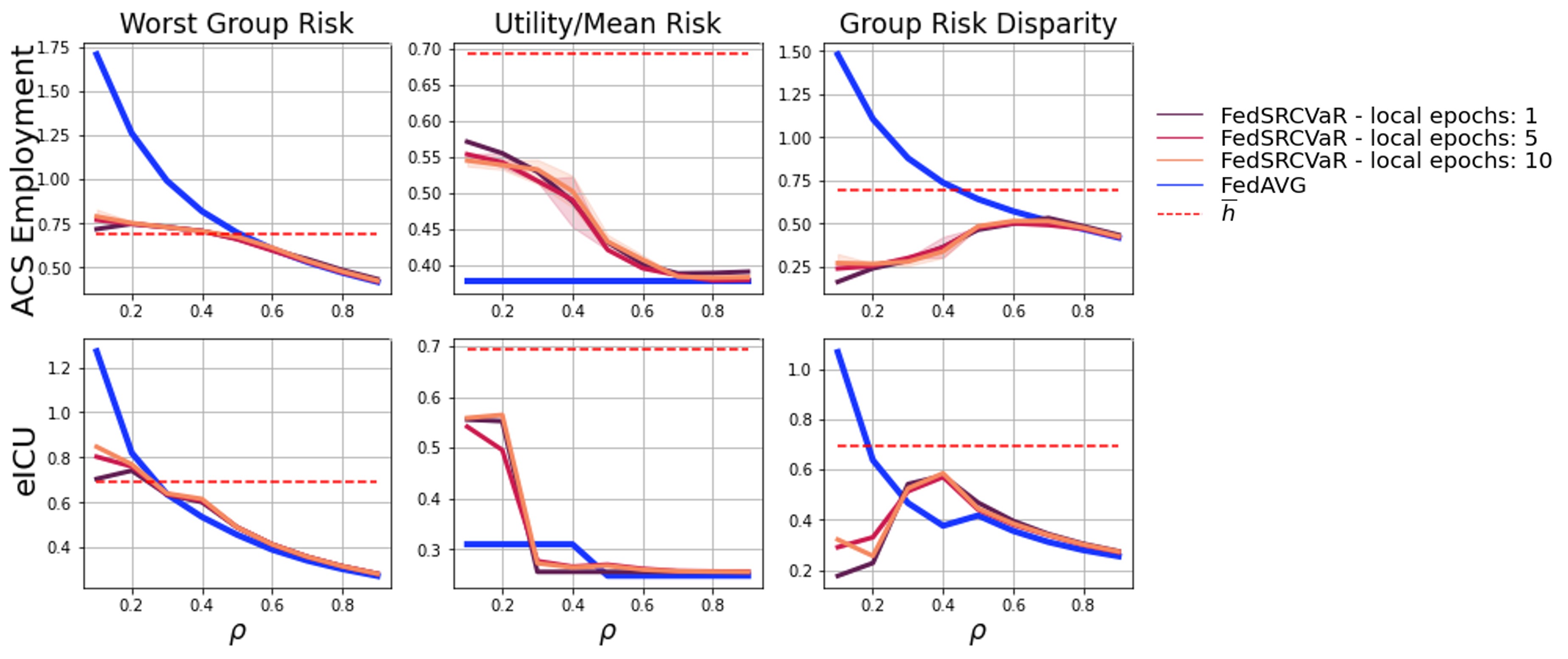}  
\vspace{-1.4\baselineskip}
\caption{Performance comparison between FedSRCVaR for local epochs $\tau \in \{1,5,10\}$ and FedAVG. $\overline{h}$ denotes the uniform classifier. We report the worst group and average/utility risks, and the group risk disparity between the worst-performing samples and the remaining population, as a function of $\rho$.} 
\label{fig:multi_roundexps} 
\end{figure}

\section{Conclusions and Limitations}\label{sec:conclusion}

Federated learning is crucial to obtaining large, representative datasets across every sensitive group. Ensuring fairness across protected groups is essential for responsible machine learning, but prior knowledge of such groups is not always available, due to privacy constraints and evolving fairness requirements. 

This is the first work to present a flexible federated learning objective to ensure minimax Pareto fairness with respect to any group of sufficient size. We propose an algorithm that solves a proxy of the proposed objective, providing performance guarantees in the convex setting.
Experimentally, our approach surpasses relevant FL baselines, exhibits comparable performance to centralized ML approaches, and demonstrates the ability to achieve a diverse range of solutions. For a single local epoch, FedSRCVaR is robust to data imbalances and heterogeneity across clients, yielding the same solution as centralized ML settings. For multiple local epochs, FedSRCVaR improves communication costs, but might yield a suboptimal solution when data are highly non-iid across clients.

\section*{Acknowledgments}
UCL authors were supported by Cisco under grant \#217462. GS is partially supported by NSF, ONR, NGA, and the Simons Foundation.
\newpage

\bibliographystyle{apalike}
\bibliography{main.bib}

\newpage
\counterwithin{table}{section}
\appendix

\section{Formal Connection to Pareto Subgroup Robustness from Centralized ML}\label{appdx:BPF_connections}

An existing approach to fair centralized machine learning through subgroup robustness is BPF \cite{martinez2020}. In this Appendix, we explore the relationship between the proposed RCVaR objective and BPF objective, by showing via the Lagrange dual that one can recover RCVaR from the BPF. 

To establish this connection, we consider the setting described in Section \ref{preliminaries}.
We also introduce an additional random variable $G$ which indicates whether a certain input-target pair belongs to the worst-performing group, i.e., the group with the high-risk samples. We let $G=1$ denote the samples belonging to the worst performing group and $G=0$ for the remaining data that do not belong to the worst group. We refer to the group $G=0$ as the best-performing group. 

By setting the worst group size to be equal to the probability $\rho$, i.e., $p(G=1)=\rho$, and constraining  $p(G=1|Z)> \epsilon$, the BPF objective is defined as 
\begin{align}\label{BPF_objective}
      \min\limits_{h \in \mathcal{H}}\max\limits_{\substack{p(G=1|Z) \\ \textnormal{s.t. } p(G=1)= \rho \\ p(G=1|Z)> \epsilon}}  \mathop{\mathbb{E}}\limits_{Z \sim p(Z)}\bigg [\frac{p(G=1|Z)}{p(G=1)}{L_{h,Z}}\bigg] = 
      \min\limits_{h \in \mathcal{H}}\max\limits_{\substack{\lambda(Z) \in Q_{\epsilon,\rho} \\ \textnormal{s.t. }\bigintsss_{\mathcal{Z}} \lambda(z) dz=1 }}  \bigintsss_{\mathcal{Z}} \lambda(z) \ell(h;z) dz ,     \tag{BPF}
\end{align}
where  
$Q_{\epsilon,\rho}= \bigg\{ \lambda(\cdot) : \lambda(z) \in \big[\frac{p(z)}{ \rho}\epsilon ,\frac{p(z)}{ \rho}\big] \bigg\}$ and $\lambda(Z) \in Q_{\epsilon,\rho}$ is the density of variable $Z$.  
We define the Lagrangian of  \ref{BPF_objective} as
\begin{equation*}
 \textsc{L}_{BPF}( \lambda(Z),\mu) =  \bigintsss_{\mathcal{Z}} \lambda(z) \ell(h;z) dz +\mu^*\bigg( 1- \bigintsss_{\mathcal{Z}} \lambda(z) dz\bigg) \\ =\bigintsss_{\mathcal{Z}} \lambda(z) (\ell(h;z)-\mu^*) dz +\mu^*,
\end{equation*}
where $\mu^*$ is the Lagrange multiplier of the constraint in \ref{BPF_objective}.
For a fixed hypothesis $h \in \mathcal{H}$, the optimal density $\lambda^*(Z)$ satisfies
\begin{equation*}
\begin{array}{rl}
     \lambda^*(Z)=\arg \max\limits_{\lambda(Z) \in Q_{\epsilon,\rho}} \textsc{L}_{BPF}( \lambda(Z),\mu)   
     =  \arg \max\limits_{\lambda(Z) \in Q_{\epsilon,\rho}}  \bigintsss_{\mathcal{Z}} \lambda(z) (\ell(h;z)-\mu^*) dz   =   \begin{cases}
                    \frac{p(z)}{ \rho}, & \text{if $\ell(h;z)>\mu^*$}\\ 
                \frac{p(z)}{ \rho}\epsilon, &  \text{if $\ell(h;z)\leq\mu^*$} 
        \end{cases}
\end{array}    
\end{equation*}
Furthermore, using the fact that $\bigintsss_{\mathcal{Z}} \lambda(z) dz=1$, 
we can compute the Lagrange multiplier $\mu^*$ as follows

\begin{align*}
     \bigintsss_{\ell(h;z)\leq\mu^*} \frac{\epsilon}{\rho}{p(z)} dz +
     \bigintsss_{\ell(h;z)>\mu^*} \frac{p(z)}{ \rho} dz =  1 
     \Longrightarrow \frac{\epsilon}{\rho} \bigintsss_{\mathcal{Z}} {p(z)} dz + \bigg(\frac{1-\epsilon}{\rho}\bigg)
     \bigintsss_{\ell(h;z)>\mu^*} {p(z)} dz =  1 \\ \\ 
     \Longrightarrow  \bigintsss_{\ell(h;z)>\mu^*} p(z) dz =  \frac{\rho -\epsilon}{(1-\epsilon)}
      \Longrightarrow  \bigintsss_{\ell(h;z)\leq\mu^*} p(z) dz =  1- \frac{\rho -\epsilon}{(1-\epsilon)}     \Longleftrightarrow \mu^*=F^{-1} (1-  \rho' ),    \tag{w/ $\rho'=\frac{\rho -\epsilon}{(1-\epsilon)}$}
\end{align*}
{where $F^{-1}(\cdot)$ corresponds to the inverse of  $F(\cdot)$, and $F(L_{h,Z})$ is the cumulative distribution function of $L_{h,Z}$.} 

Then, by substituting the optimal density $\lambda^*(Z)$ and $\mu^*$ in the \ref{BPF_objective} objective we get
\begin{equation} \label{BPF_RCVAR_equivalence}
 \begin{array}{rl} 
     & \min\limits_{h \in \mathcal{H}}\bigintsss_{\ell(h;z)\leq\mu^*} \epsilon\frac{p(z)}{\rho} \ell(h;z) dz + \bigintsss_{\ell(h;z)>\mu^*} \frac{p(z)}{\rho} \ell(h;z) dz  
    \\ \\=&  \min\limits_{h \in \mathcal{H}} \frac{\epsilon}{\rho} \bigintsss_{\mathcal{Z}} {p(z)} \ell(h;z) dz + \frac{1-\epsilon}{\rho}
     \bigintsss_{\ell(h;z)> F^{-1} (1- \rho') } p(z) \ell(h;z) dz \quad \text{(Recall $\rho= \bigintsss_{\ell(h;z)> F^{-1} ( 1- \rho') }p(z) dz$)}
\\\\= & \min\limits_{h \in \mathcal{H}} \frac{\epsilon}{ \rho} \mathop{\mathbb{E}}\limits_{Z \sim p(Z)}[L_{h,Z}] + (1-\epsilon)\mathop{\mathbb{E}}\limits_{Z \sim p(Z)} \big[ L_{h,Z}|L_{h,Z}> F^{-1}\big( 1- \rho' \big) \big] 
\\\\= &\min\limits_{h \in \mathcal{H}} \frac{\epsilon}{ \rho} \mathop{\mathbb{E}}\limits_{Z \sim p(Z)}[L_{h,Z}] +  (1-\epsilon) CVaR_{1-\rho'}(L_{h,Z})\\\\
     = &\min\limits_{h \in \mathcal{H}} \underbrace{{\epsilon} \mathop{\mathbb{E}}\limits_{Z}[L_{h,Z}] +  (1-\epsilon) CVaR_{ 1- \rho'}(L_{h,Z}) }_{\mathrm{RCVaR \ for \  \rho'=  \frac{\rho -\epsilon}{(1-\epsilon)} \ and \ trade-off \ \epsilon \approx 0 }} + \underbrace{\frac{\epsilon(1-\rho)}{ \rho} \mathop{\mathbb{E}}\limits_{Z}[L_{h,Z}] }_{\mathrm{Add'l \ term}}  
   \\ \\ \approx &\min\limits_{h \in \mathcal{H}} {{\epsilon} \mathop{\mathbb{E}}\limits_{Z \sim p(Z)}[L_{h,Z}] +  (1-\epsilon) CVaR_{ 1- \rho'}(L_{h,Z}) } \quad \text{(since $\epsilon\approx0$)}
 \end{array}
\end{equation}

Eq. \ref{BPF_RCVAR_equivalence} shows that BPF is equivalent to RCVaR plus an additional error term for the same $\epsilon$ and $\rho'=  \frac{\rho -\epsilon}{(1-\epsilon)} $. We remark that for sufficiently small $\epsilon$ values (i.e., $\epsilon \approx 0$)
 the additional term is negligible and probability $\rho'  \approx \rho$, which makes the two objectives equivalent. 
 
 Due to this equivalence, we argue that the results presented in Lemma 3.2 in \cite{martinez2020}, which proves the existence of a critical partition size $\rho$ that leads to the uniform classifier for sufficiently small $\epsilon$ values; and Lemma 3.3 in \cite{martinez2020}, that studies the penalty in performance when we use subgroup robustness instead of known groups, apply also to our case for $\epsilon\approx0$. 
In Remark \ref{rem:critical_rho}, we state the impact of the threshold $c$ in the resulting hypothesis in our objective, RCVaR.

\begin{remark}\label{rem:critical_rho}
The hypothesis that determines the ($1-\rho$)-quantile $c$, for which any realizations of ${L_{h,Z}}$ are at most $c$, is the uniform classifier $\overline{h}: \overline{h}_y(X)=\frac{1}{|\mathcal{{Y}}|} \forall y \in \mathcal{Y}$. \end{remark}

\subsection{Federalization of BPF}\label{appdx:BPF_federalization}

One of the merits of RCVaR compared to BPF is that it can easily be deployed in dynamic machine learning settings, such as online learning settings, where we (continue to) optimize the global model using a stream of new data arriving sequentially in real-time both in centralized and federated learning settings. 
In contrast, deploying BPF may present complexities and limitations due to the need for estimating and optimizing per-sample adversarial weights at each optimization round. Since data arrives sequentially, managing and accessing the last risk evaluation for every sample and adjusting the set from which adversarial weights are selected become computationally expensive. These factors can impede the efficiency of the learning process.

Another advantage of RCVaR is its suitability for federated learning. In Algorithm \ref{alg:fedcvar} we showed that the federalization of RCVaR requires only the exchange of the updated model-threshold pair between clients and the server. This efficient exchange allows for achieving a solution equivalent to our centralized BPF objective in \cite{pmlr-v139-martinez21a}. In contrast, federalizing BPF poses challenges, such as (a) failure to provide guarantees that the produced global model is equivalent to centralized BPF; or (b) significantly amplifying the computation and communication costs of the federated learning procedure, and raising additional privacy concerns. 

In particular, one approach to federalize BPF is to assign a common value of $\rho$ across all clients (i.e., $\rho=\rho_k \forall k \in \mathcal{K}$), resulting in an overall partition of size $\rho$. However, this approach has a weaker adversary compared to our proposed framework. In our framework, the adversary has the flexibility to choose any partition of size $\rho$ and allocate more weight to clients with worse performances by shifting their budget ($\rho_k$) from clients with higher utility. Also, by assigning a fixed $\rho=\rho_k$, we have no guarantee that the solution is equivalent to (centralized) BPF, while our proposal is guaranteed to produce an equivalent solution.

Another way to federalize BPF would be for clients to share information about their local loss distributions with the server. This requires the clients to know their local group sizes (i.e., $\rho_k$) to ensure global Pareto subgroup robustness. However, due to the data heterogeneity across clients, there is no guarantee or prior knowledge about how the global worst group is distributed across clients at each communication round $t$. 
Thus, acquiring this information would involve additional communication rounds, increased computations on the client side, and additional computation on the server side to correctly hash and share the local group sizes. This not only amplifies the computation and communication costs but also raises privacy concerns as clients need to share information about their local loss distributions.

\section{Basic Definitions}\label{appdx:definitions}
We first provide some standard definitions and remarks.  
In what follows, the norm $||\cdot||$ denotes the Euclidean norm.

\begin{definition}[Convex Set]\label{def:set_convexity}
A set ${\Theta}$ is convex if for any two points $\theta_1,\theta_2 \in {\Theta}$ we have that their convex combination
\begin{equation*}
    \theta=\mu \theta_1 +(1-\mu)\theta_2,\textnormal{ with } \mu \in [0,1],
\end{equation*}
also belongs to $ {\Theta}$, i.e. $\theta \in {\Theta}$.
\end{definition}

\begin{definition}[Convex Function]\label{def:convexity_simple}
A function $f$ with domain $dom(f)$ is convex if and only if $dom(f)$ is a convex set and for all $v,w \in dom(f)$ we have that 
\begin{equation}\label{jensens_ineq}
    f(\mu w+ (1-\mu)v) \leq \mu f( w)+ (1-\mu)f(v),\textnormal{ with } \mu \in [0,1]. 
\end{equation}
\end{definition} 
\begin{definition}[Lipschitzness]\label{def:lipschitzness}
A function $f: dom(f) \to \R$ is $G$-Lipschitz if for all $ v,w \in  dom(f)$ we have that
\begin{equation}\label{def_lipschitz}
   ||\nabla f(w) ||\leq G \quad \textnormal{and } \quad | f(w)-  f(v)| \leq G ||w-v||, \textnormal{ for some }  G>0. 
\end{equation}
\end{definition}

{ 
\begin{lemma}[First Order Condition]\label{def:convexity}
A differentiable function $f$ with domain $dom(f)$ is convex if $dom(f)$ is convex and for any $v,w \in dom(f)$ we have that 
\begin{equation*}
    f(w) \leq f(v)+ \langle \nabla f(v), w-v\rangle.
\end{equation*}
\end{lemma}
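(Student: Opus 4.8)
The plan is to deduce the Jensen-type inequality of Definition \ref{def:convexity_simple} directly from the hypothesized first-order inequality, which is exactly what it means for $f$ to be convex. So I would fix arbitrary points $v,w \in dom(f)$ and a scalar $\mu \in [0,1]$, and set $x := \mu w + (1-\mu)v$. Because $dom(f)$ is assumed convex (Definition \ref{def:set_convexity}), the point $x$ again lies in $dom(f)$, so the first-order inequality may legitimately be invoked at $x$. The whole argument rests on choosing $x$ --- the convex combination itself --- as the base point of the tangent inequality, rather than $v$ or $w$.

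Concretely, I would write down two instances of the first-order condition with base point $x$: one comparing $f(w)$ to the tangent at $x$, and one comparing $f(v)$ to the tangent at $x$. Using the identities $w - x = (1-\mu)(w-v)$ and $v - x = -\mu(w-v)$, the two linear terms become $(1-\mu)\langle \nabla f(x), w-v\rangle$ and $-\mu\langle \nabla f(x), w-v\rangle$. I would then take the convex combination of these two instances, weighting the first by $\mu$ and the second by $(1-\mu)$ and adding; since the weights are nonnegative, the inequalities survive the sum. The surviving gradient contributions are $\mu(1-\mu)\langle \nabla f(x), w-v\rangle$ and $-\mu(1-\mu)\langle \nabla f(x), w-v\rangle$, which cancel identically. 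What remains is $\mu f(w) + (1-\mu)f(v)$ on one side and $f(x) = f(\mu w + (1-\mu)v)$ on the other --- precisely inequality \eqref{jensens_ineq} of Definition \ref{def:convexity_simple}. Since $v$, $w$, and $\mu$ were arbitrary, $f$ is convex.

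The one genuinely delicate point --- and the place where the argument is most easily mishandled --- is keeping the orientation of the inequalities consistent through the weighted sum. For the conclusion that $f$ is convex to follow, the first-order condition must place the tangent plane \emph{below} the graph, i.e.\ in the form $f(w) \geq f(v) + \langle \nabla f(v), w-v\rangle$; with that orientation the weighted combination yields $\mu f(w) + (1-\mu)f(v) \geq f(x)$, which is exactly Jensen's inequality. The arithmetic itself is trivial: the surviving gradient terms are negatives of one another and sum to zero for every $x$ and $\mu$. No continuity, mean-value, or limiting argument is needed for this direction --- a single application of the tangent inequality at the intermediate point $x$, combined linearly with the nonnegative weights $\mu$ and $1-\mu$, suffices.
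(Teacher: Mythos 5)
Your proof is correct and is exactly the standard argument that the paper itself defers to: the paper gives no proof of its own, citing Boyd and Vandenberghe, Section~3.1.3, where this same device --- applying the tangent inequality twice at the intermediate point $x=\mu w+(1-\mu)v$ and taking the $\mu,(1-\mu)$-weighted sum so the gradient terms cancel --- is the proof. You were also right to flag the orientation: as printed, the lemma's inequality $f(w)\leq f(v)+\langle \nabla f(v), w-v\rangle$ for all $v,w$ is the first-order condition for \emph{concavity}, so the statement contains a sign typo, and your proof correctly works with the intended form $f(w)\geq f(v)+\langle \nabla f(v), w-v\rangle$.
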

\begin{proof}
For proof see section 3.1.3 in \cite{boyd2004convex}.
\end{proof}
}

\begin{definition}[Smoothness]\label{def:smoothness}
A function $f:dom(f) \to \R$ is $\beta$-smooth if it is continuously differentiable and its gradient is $\beta$-Lipschitz, i.e. $\exists \beta: \forall v,w \in dom(f)$ we have that 
\begin{equation*}
    ||\nabla f(w)-\nabla f(v)|| \leq \beta ||w-v||.
\end{equation*} 
\end{definition}

\begin{definition}[Sub-gradient]\label{def:subgradient}
Let $f:dom(f) \to \R$, with $dom(f)\subseteq \R^d$. Then $g \in \R^d$ is a subgradient of $f$ at point $v$ if for any $w \in dom(f)$ we have that 
\begin{equation*}
    f(w)-  f(v) \geq \langle g, w-v\rangle. 
\end{equation*}
\end{definition}

Note that the subgradient $g$ might not be unique. We denote $\partial f(v)$ the set of subgradients computed at a point $v$, also called subdifferential of $f$ at $v$, where $g \in \partial f(v)$. We also note that when a function $f:dom(f) \to \R$ is $G$-Lipschitz and convex, Eq. \ref{def_lipschitz} becomes $||g||\leq G$. We provide this elementary proof in Lemma \ref{lemma:Lipschitz_subgradient}.

\begin{lemma}\label{lemma:Lipschitz_subgradient}
Let a function $f:dom(f) \to \R$ be a $G-$Lipschitz continuous and $\partial f(v) \neq \emptyset$. Then, for any $v \in dom(f)$ we have that 
\begin{equation}\label{eq:Lipschitz_subgradient}
     ||g|| \leq G, \quad \textnormal{with}  
     \quad 
      g \in \partial f(v)  
\end{equation}
\end{lemma}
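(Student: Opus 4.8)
The plan is to combine the two defining inequalities---the subgradient inequality of Definition \ref{def:subgradient} and the Lipschitz bound on function values from Definition \ref{def:lipschitzness}---by evaluating both along a single, carefully chosen probe direction. First I would dispose of the trivial case $g=\vzero$, for which $||g||=0\leq G$ holds immediately; so assume $g\neq\vzero$ and fix some $g\in\partial f(v)$.

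The key step is to choose the test point $w=v+t\,\frac{g}{||g||}$ for a small $t>0$ selected so that $w\in dom(f)$. With this choice $w-v=t\,g/||g||$, so the subgradient inequality $f(w)-f(v)\geq\langle g,w-v\rangle$ evaluates to $f(w)-f(v)\geq t\,||g||$, using $\langle g,g/||g||\rangle=||g||$. On the other hand, the Lipschitz condition gives $f(w)-f(v)\leq|f(w)-f(v)|\leq G||w-v||=G\,t$. Chaining these two bounds yields $t\,||g||\leq G\,t$, and dividing by $t>0$ gives exactly $||g||\leq G$, as claimed.

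The only point requiring care---and the main (mild) obstacle---is ensuring that the probe point $w$ indeed lies in $dom(f)$, so that both defining inequalities may legitimately be applied at $w$. This is immediate whenever $v$ is an interior point of $dom(f)$, since then $w\in dom(f)$ for all sufficiently small $t>0$; in the settings of this paper the relevant domains are all of $\R$ or convex sets with nonempty interior, so no real difficulty arises. Equivalently, one may phrase the argument without explicitly picking $t$: the combined inequality $\langle g,u\rangle\leq G||u||$ holds for every admissible direction $u=w-v$ with $w\in dom(f)$, and taking the supremum of the left-hand side over such $u$ ranging through a small ball about $v$---where the unit direction $g/||g||$ is admissible---forces $||g||\leq G$.
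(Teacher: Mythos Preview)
Your proof is correct and follows essentially the same approach as the paper: combine the subgradient inequality $f(w)-f(v)\geq\langle g,w-v\rangle$ with the Lipschitz bound $|f(w)-f(v)|\leq G\,||w-v||$. The paper's proof simply states these two inequalities and concludes; your version is more explicit in specifying the probe point $w=v+t\,g/||g||$ and carrying out the cancellation, but the underlying idea is identical.
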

\begin{proof}
    Since $f$ is $G-$Lipschitz we have that
\begin{align*}
    | f(w)-  f(v)| \leq G ||w-v||, \textnormal{ for some }  G>0
\end{align*}
 Also from the subgradient definition we know that
 \begin{align*}
      f(w)-  f(v) \geq \langle g, w-v\rangle.
 \end{align*}
Combining the two inequalities we get that $||g|| \leq G$.
 
\end{proof}

{
\begin{definition}[Uniform Stability, \cite{bousquet2002stability}]\label{def:uniform_stability} Let $f:dom(f) \to \R$.
A randomized algorithm $\mathcal{A}$ is $\zeta-$uniformly stable if for any datasets $D,D'$ that differ in at most a single sample, we have that
\begin{equation}\label{eq:def_stability}
    \sup\limits_{z} \mathop{\E}[f(\mathcal{A}(D);z)-f(\mathcal{A}(D');z) ]\leq \zeta
\end{equation}
where $\zeta>0$ and the expectation is w.r.t. the randomness of the algorithm and the samples. 
\end{definition}

\section{Analysis of Algorithm \ref{alg:fedcvar}}\label{appdx:algo}

In this section, we analyze the properties of Algorithm \ref{alg:fedcvar}, as stated in Lemmas \ref{lemma:multi_fedcvar_convergence} and \ref{lemma:excess_risk}.

\subsection{Smooth Approximation of Eq. \ref{empirical_Fed_cvar_tradeoff}
}\label{appdx:smooth_approx}

In order to provide an algorithmic analysis for our setting, we require the auxiliary function $f$, defined as 
\begin{equation*}
    f(\bm \theta,c;z)= (1-\epsilon)[c + \frac{1}{\rho}(\ell(\bm \theta;z) - c)_+] + {\epsilon} \ell(\bm \theta;z)
\end{equation*}
to be smooth. For this reason, we define 
the smoothed version of the auxiliary function $f$ as
\begin{equation*}
    \Tilde{f}(\bm \theta,c;z)= (1-\epsilon)[c + \frac{1}{\rho}s(\ell(\bm \theta;z) - c)] + {\epsilon} \ell(\bm \theta;z),
\end{equation*}
where $s$ is a convex and $(\frac{2}{\gamma})-$smooth function.

Given a function $s$ that satisfies the conditions given in Definition \ref{def:smoothed_plus_function}, 
we provide the properties for the auxiliary function $f$ and the smoothed function $\Tilde{f}$ in \ref{lemma:properties}.

\begin{proof}[{Proof of Lemma \ref{lemma:properties}}]
We use the same numbering to prove each case.
\paragraph{1.}{ 
For the convexity of $f$ we just need to show that $(\cdot)_+$ is convex. For a fixed $j \in \{1,\dots,m\}$, with $m>0$, and $ \lambda \in [0,1]$, we have that
\begin{align*}
&y_j \leq \max_i y_i,\quad x_j \leq \max_i x_i \\&\textnormal{and thus}\quad  \lambda x_j + (1-\lambda)y_j \leq \lambda \max_i x_i + (1-\lambda)\max_i y_i.
\end{align*}
Consequently, we also have $\max\limits_j [\lambda x_j + (1-\lambda)y_j]\leq \lambda \max\limits_i x_i + (1-\lambda)\max\limits_i y_i.$ {Note that in our scenario $m=2$ and $y_j\in\{0,\ell(\bm \theta, z_i)-c\}$ and $x_j\in\{0,\ell(\bm \theta, z_l)-c\}$ with $j \in [m]$ and $i,l \in [n]$.}
 
Note that the smoothed plus function $s$ is convex w.r.t. $z$ by definition. Thus, the convexity of the function $\Tilde{f}$ is immediate since $\Tilde{f}$ is a linear combination of convex terms.} 

\paragraph{2.}For the second property, we let $g$ denote the subgradient of $f$ for a fixed pair of $({\bm \theta},c)$ (i.e. $g \in \partial_{({\bm \theta},c)} f({\bm \theta},c;z)$).
As we see in the previous appendix, the Euclidean norm of the subgradient of a convex and $G_{\rho,\epsilon}-$Lipschitz function, is upper bounded by $G_{\rho,\epsilon}$, i.e. $||g||\leq G_{\rho,\epsilon}$. Thus, we work out a Lipschitzness parameter by finding an upper bound for the subgradient of $f$, $\quad \forall g \in \partial_{({\bm \theta},c)} f({\bm \theta},c;z)$.

We remark that the plus function $(\cdot)_+$ in function $f$, induces three scenarios for any $z$: (i) $\ell({\bm \theta};z)>c$, (ii) $\ell({\bm \theta};z)=c$, or (iii) $\ell({\bm \theta};z)<c$. Thus, we define the set of subgradients as

    \begin{align}\label{set_subgrads}
        \partial_{({\bm \theta},c)} f({\bm \theta},c;z)= & 
        \begin{cases}
                    \begin{bmatrix}
                        \big(\frac{(1-\epsilon)}{\rho}+ \epsilon \big) \nabla_{\bm \theta}\ell({\bm \theta};z)\\ 
                        (1-\epsilon)(1-\frac{1}{\rho})
                    \end{bmatrix},  &\text{if $\ell({\bm \theta};z)>c$}   \\ \\
                    \begin{bmatrix}
                        \big(\frac{(1-\epsilon)t}{\rho}+ \epsilon \big) \nabla_{\bm \theta}\ell({\bm \theta};z)\\ 
                        (1-\epsilon)(1-\frac{t}{\rho})
                    \end{bmatrix}, &\text{if $\ell({\bm \theta};z)=c$, $t\in [0,1]$}   \\ \\
                    \begin{bmatrix}
                        \epsilon  \nabla_{\bm \theta}\ell({\bm \theta};z)\\ 
                         1-\epsilon
                    \end{bmatrix},  & \text{if $\ell({\bm \theta};z)<c$}  
        \end{cases}
    \end{align}

Consequently, $\forall g \in \partial_{({\bm \theta},c)} f({\bm \theta},c;z)$ we get
\begin{align*} 
    ||g||^2\leq  
                 \begin{cases}
                    G^2 \bigg(\frac{(1-\epsilon)}{\rho}+ \epsilon \bigg)^2 + (1-\epsilon)^2(1-\frac{1}{\rho})^2, &  \text{if $\ell({\bm \theta};z)>c$}   \\ \\
                    \max\limits_{t \in [0,1]} \bigg [G^2 \bigg(\frac{(1-\epsilon)t}{\rho}+ \epsilon \bigg)^2 + (1-\epsilon)^2(1-\frac{t}{\rho})^2 \bigg ], &  \text{if $\ell({\bm \theta};z)=c$}   \\ \\
                    G^2  \epsilon^2 + (1-\epsilon)^2, & \text{if $\ell({\bm \theta};z)<c$}   
        \end{cases}
\\
\Rightarrow ||g||\leq  \max \bigg\{ \sqrt{\frac{G^2(1-\epsilon +\epsilon \rho)^2 + (1-\epsilon)^2({\rho}-1)^2 }{\rho^2}} , \sqrt{G^2  \epsilon^2 + (1-\epsilon)^2}\bigg\}
\end{align*}
Using similar reasoning, we can show that the smoothed auxiliary function $\Tilde{f}({\bm \theta},c;z)$ is $G_{\rho,\epsilon}-$Lipschitz for all $z$. Let $s'$ be the derivative of the smoothed plus function $s$. We have that
\begin{equation}
    \nabla \Tilde{f}({\bm \theta},c;z)=\begin{bmatrix}
        \bigg(\frac{(1-\epsilon)s'(\ell({\bm \theta};z)-c)}{\rho}+ \epsilon \bigg) \nabla_{\bm \theta}\ell({\bm \theta};z)\\ 
         (1-\epsilon)\bigg(1-\frac{s'(\ell({\bm \theta};z)-c)}{\rho}\bigg)
        \end{bmatrix}
\end{equation}
Since $\ell \in [0,1]$, we also have that $s' \in [0,1]$ and thus $||\nabla \Tilde{f}||^2\leq \max\limits_{t \in [0,1]} \bigg [G^2 \bigg(\frac{(1-\epsilon)t}{\rho}+ \epsilon \bigg)^2 + (1-\epsilon)^2(1-\frac{t}{\rho})^2 \bigg ]\leq G_{\rho,\epsilon}^2$.

\paragraph{3.}Finally, we recall that by assumption the loss function $\ell$ is $\beta-$smooth and the smoothing plus function $s$ is $\frac{2}{\gamma}-$smooth. Then, for any pairs $m_1=({\bm \theta}_1,c_1)$ and $m_2=({\bm \theta}_2,c_2)$, for the first coordinate of $\nabla \Tilde{f}$ we obtain
\begin{align}\label{smoothness_proof_a}
   & \bigg|\bigg|\bigg(\frac{(1-\epsilon)s'(\ell({\bm \theta}_1;z)-c_1)}{\rho}+ \epsilon \bigg) \nabla_{\bm \theta}\ell({\bm \theta}_1;z)  -\bigg(\frac{(1-\epsilon)s'(\ell({\bm \theta}_2;z)-c_2)}{\rho}+ \epsilon \bigg) \nabla_{\bm \theta}\ell({\bm \theta}_2;z)\bigg |\bigg| \notag\\\notag \\  
    =& \bigg|\bigg|\bigg(\frac{(1-\epsilon)s'(\ell({\bm \theta}_1;z)-c_1)}{\rho}+ \epsilon \bigg) \nabla_{\bm \theta}\ell({\bm \theta}_1;z) - \bigg(\frac{(1-\epsilon)s'(\ell({\bm \theta}_1;z)-c_1)}{\rho}+ \epsilon \bigg) \nabla_{\bm \theta}\ell({\bm \theta}_2;z)\notag
    \\ &+ \bigg(\frac{(1-\epsilon)s'(\ell({\bm \theta}_1;z)-c_1)}{\rho}+ \epsilon \bigg) \nabla_{\bm \theta}\ell({\bm \theta}_2;z)  -\bigg(\frac{(1-\epsilon)s'(\ell({\bm \theta}_2;z)-c_2)}{\rho}+ \epsilon \bigg) \nabla_{\bm \theta}\ell({\bm \theta}_2;z)\bigg |\bigg|\notag 
\\\notag\\
\leq& \bigg(\frac{(1-\epsilon)\big|s'(\ell({\bm \theta}_1;z)-c_1)\big|}{\rho}+ \epsilon \bigg)\cdot ||\nabla_{\bm \theta}\ell({\bm \theta}_1;z)-\nabla_{\bm \theta}\ell({\bm \theta}_2;z)|| \notag
   \\ &+||\nabla_{\bm \theta}\ell({\bm \theta}_2;z)|| \cdot\bigg|\bigg(\frac{(1-\epsilon)s'(\ell({\bm \theta}_1;z)-c_1)}{\rho}+ \epsilon \bigg) -\bigg(\frac{(1-\epsilon)s'(\ell({\bm \theta}_2;z)-c_2)}{\rho}+ \epsilon \bigg)  \bigg|
\end{align}
We know that: 
\begin{enumerate}
    \item Since  $\ell \in [0,1]$, we have that $s' \in [0,1]$ and 
     \begin{equation}\label{codn1}
    \bigg(\frac{(1-\epsilon)\big|s'(\ell({\bm \theta}_1;z)-c_1)\big|}{\rho}+ \epsilon \bigg) \leq \bigg(\frac{(1-\epsilon)}{\rho}+ \epsilon \bigg). 
     \end{equation}
\item The loss function $\ell$ is $\beta-$smooth, i.e.
    \begin{equation}\label{codn2}
     ||\nabla_{\bm \theta}\ell({\bm \theta}_1;z)-\nabla_{\bm \theta}\ell({\bm \theta}_2;z)|| \leq \beta||{\bm \theta}_1-{\bm \theta}_2|| .
\end{equation}
\item The loss function $\ell$ is convex and $G-$Lipschitz, thus
      \begin{equation}\label{codn3}
     || \nabla_{\bm \theta}\ell({\bm \theta}_2;z)|| \break\leq G. 
\end{equation}
\item The smoothed plus function $s$ is $\frac{2}{\gamma}$-smooth, i.e.
    \begin{equation}\label{codn4}
   ||s'(\ell({\bm \theta}_1;z)-c_1)   -s'(\ell({\bm \theta}_2;z)-c_2)|| \leq \frac{2}{\gamma}||  (\ell({\bm \theta}_1;z)-c_1)   - (\ell({\bm \theta}_2;z)-c_2)||
\end{equation}
\end{enumerate}

By substituting \ref{codn1}-\ref{codn4} into Eq. \ref{smoothness_proof_a} we get
\begin{align*} 
 &\bigg(\frac{(1-\epsilon)\big|s'(\ell({\bm \theta}_1;z)-c_1)\big|}{\rho}+ \epsilon \bigg)\cdot ||\nabla_{\bm \theta}\ell({\bm \theta}_1;z)-\nabla_{\bm \theta}\ell({\bm \theta}_2;z)||
     \\&+||\nabla_{\bm \theta}\ell({\bm \theta}_2;z)|| \cdot\bigg|\bigg(\frac{(1-\epsilon)s'(\ell({\bm \theta}_1;z)-c_1)}{\rho}+ \epsilon \bigg)  -\bigg(\frac{(1-\epsilon)s'(\ell({\bm \theta}_2;z)-c_2)}{\rho}+ \epsilon \bigg)  \bigg| 
    \\ \\   
    \leq & \bigg(\frac{(1-\epsilon)}{\rho}+ \epsilon \bigg)\beta|| {\bm \theta}_1 -{\bm \theta}_2 ||  + G\frac{(1-\epsilon)}{\rho} \big|s'(\ell({\bm \theta}_1;z)-c_1)  -s'(\ell({\bm \theta}_2;z)-c_2)\big|\allowdisplaybreaks
    \\ \\  
    \leq & \bigg(\frac{(1-\epsilon)}{\rho}+ \epsilon \bigg)\beta|| {\bm \theta}_1 -{\bm \theta}_2 ||  + G\frac{(1-\epsilon)}{\rho}\frac{2}{\gamma} \big|(\ell({\bm \theta}_1;z)-c_1)  -(\ell({\bm \theta}_2;z)-c_2)\big|\allowdisplaybreaks
    \\ \\ 
       \leq &\bigg(\frac{(1-\epsilon)}{\rho}+ \epsilon \bigg)\beta|| {\bm \theta}_1 -{\bm \theta}_2 || + G^2\frac{(1-\epsilon)}{\rho}\frac{2}{\gamma} ||({\bm \theta}_1,c_1) -  ({\bm \theta}_2,c_2)|| \tag{by $G-$Lipschitzness of $\ell$}
    \\ \\ = &
    \bigg(\frac{(1-\epsilon)}{\rho}(\beta + \frac{2}{\gamma} G^2) + \epsilon \beta  \bigg) ||m_1 -  m_2||,
\end{align*}
with $m_1=({\bm \theta}_1,c_1)$ and $m_2=({\bm \theta}_2,c_2)$.
\paragraph{4.} 
From Definition \ref{def:smoothed_plus_function} 
we have that for any fixed pair of $({\bm \theta},c)$:
    \begin{align*} 
        & (\ell({\bm \theta};z) - c)_+\leq s(\ell({\bm \theta};z) - c)\leq  (\ell({\bm \theta};z) - c)_+ + {\gamma} &\\   
     \Rightarrow & 
    (1-\epsilon)[c + \frac{1}{\rho}(\ell({\bm \theta};z) - c)_+] + {\epsilon} \ell({\bm \theta};z)  
     \leq (1-\epsilon)[c + \frac{1}{\rho}s(\ell({\bm \theta};z) - c)] + {\epsilon} \ell({\bm \theta};z) &\\&
     \leq (1-\epsilon)[c + \frac{1}{\rho}(\ell({\bm \theta};z) - c)_+] + {\epsilon} \ell({\bm \theta};z) + \frac{(1-\epsilon)}{\rho}{\gamma}       & \\   
     \Rightarrow &
    f({\bm \theta},c;z)\leq \Tilde{f}({\bm \theta},c;z)\leq  f({\bm \theta},c;z) + \frac{(1-\epsilon)}{\rho}{\gamma} 
    \end{align*}
\end{proof}

\subsection{Convergence of Algorithm \ref{alg:fedcvar}}\label{appdx:convergence}

For the convergence analysis of Algorithm \ref{alg:fedcvar} we leverage the standard results for FedAvg presented in Theorem 1 in \cite{advancesFL2}. We apply them to our setting and provide the proof of Lemma \ref{lemma:multi_fedcvar_convergence} below.

\begin{proof}[Proof of Lemma \ref{lemma:multi_fedcvar_convergence}]

We apply the results from Theorem 1 in \cite{advancesFL2} to our setting by substituting the properties of the loss function $\ell$ with those of the non-smooth and smoothed auxiliary functions, $f$ and $\Tilde{f}$ respectively, given by Lemma \ref{lemma:properties}. 
In particular,
\begin{enumerate}
    \item $\beta$ is changed to the smoothness parameter of $\tilde{f}$, which is $\frac{(1-\epsilon)}{\rho}(\beta + \frac{2}{\gamma} G^2) + \epsilon \beta $;
    \item we use $\sigma^2$ to represent the bounded variance in the case that each client uses a batch with size $b$ instead of a single sample;
    \item $M$ is changed to $\sqrt{M^2 +B^2}$ since we optimize over $v \in \Theta\times[0,B]$.
\end{enumerate}
 
Considering also Lemma \ref{lemma:properties}, 
property 4, we finally obtain
\begin{align*}\label{finalopt_error}
&\mathop{\E}\bigg[\sum\limits_{k \in \mathcal{K}}\sum\limits_{i =1}^{n_k} \frac{ {f}(\overline{\bm  {\bm \theta}}_T,\overline{c}_T;z^k_i)}{n} \bigg] \leq
     \mathop{\E}\bigg[\sum\limits_{k \in \mathcal{K}}\sum\limits_{i =1}^{n_k} \frac{\tilde{f}(\overline{\bm  {\bm \theta}}_T,\overline{c}_T;z^k_i) }{n} \bigg]\allowdisplaybreaks
     \\ \\
     \leq&  
     \mathop{\E}\bigg[\sum\limits_{k \in \mathcal{K}}\sum\limits_{i =1}^{n_k} \frac{ \tilde{f}( {\bm \theta}^*_D,c^*_D;z^k_i)}{n} \bigg] 
     +    \frac{2\big(\frac{(1-\epsilon)}{\rho}(\beta + \frac{2}{\gamma} G^2) + \epsilon \beta  \big) ({M^2 +B^2})}{\tau T} 
         + \frac{2\sigma \sqrt{{M^2 +B^2}}}{\sqrt{|\mathcal{K}| b \tau T}}  \allowdisplaybreaks
     \\ & + 5 \bigg ( \frac{\big(\frac{(1-\epsilon)}{\rho}(\beta + \frac{2}{\gamma} G^2) + \epsilon \beta  \big) \sigma^2 ({M^2 +B^2})^2}{\tau b T^2} \bigg)^\frac{1}{3}  + 19 \bigg ( \frac{\big(\frac{(1-\epsilon)}{\rho}(\beta + \frac{2}{\gamma} G^2) + \epsilon \beta  \big)\mu^2 ({M^2 +B^2})^2}{T^2}\bigg)^\frac{1}{3}
 \\ \\   = &  \mathop{\E}\bigg[\sum\limits_{k \in \mathcal{K}}\sum\limits_{i =1}^{n_k} \frac{ \tilde{f}( {\bm \theta}^*_D,c^*_D;z^k_i)}{n} \bigg] 
 +    \frac{2\big(\frac{(1-\epsilon)}{\rho}(\beta + \frac{2}{\gamma} G^2) + \epsilon \beta  \big) ({M^2 +B^2})}{\tau T} 
      \\ &  + \frac{2\sigma \sqrt{{M^2 +B^2}}}{\sqrt{|\mathcal{K}| b \tau T}} 
      +  \bigg ( \frac{\big(\frac{(1-\epsilon)}{\rho}(\beta + \frac{2}{\gamma} G^2) + \epsilon \beta  \big)  ({M^2 +B^2})^2}{ T^2} \bigg)^\frac{1}{3} \bigg( 5(\frac{\sigma^2}{b\tau})^{\frac{1}{3}}  + 19 { \mu^{\frac{2}{3}} } \bigg)
     \\ \\ \leq &  \mathop{\E}\bigg[\sum\limits_{k \in \mathcal{K}}\sum\limits_{i =1}^{n_k} \frac{  {f}( {\bm \theta}^*_D,c^*_D;z^k_i)}{n} \bigg] 
     +    \frac{2\big(\frac{(1-\epsilon)}{\rho}(\beta + \frac{2}{\gamma} G^2) + \epsilon \beta  \big) ({M^2 +B^2})}{\tau T}   +\frac{(1-\epsilon)\gamma}{\rho}
       \\ & + \frac{2\sigma \sqrt{{M^2 +B^2}}}{\sqrt{|\mathcal{K}| b \tau T}}  
      +  \bigg ( \frac{\big(\frac{(1-\epsilon)}{\rho}(\beta + \frac{2}{\gamma} G^2) + \epsilon \beta  \big)  ({M^2 +B^2})^2}{ T^2} \bigg)^\frac{1}{3} \bigg( 5(\frac{\sigma^2}{b\tau})^{\frac{1}{3}}  + 19 { \mu^{\frac{2}{3}} } \bigg).
\end{align*}
Finally, the learning rate becomes 
 \begin{align*} 
     \eta= & \min\bigg\{  
      \frac{(M^2+B^2)^{\frac{1}{3}}}{\tau(\mu^2 \big(\frac{(1-\epsilon)}{\rho}(\beta + \frac{2}{\gamma} G^2) + \epsilon \beta  \big)  T)^{\frac{1}{3}}} ,  \frac{\sqrt{b|\mathcal{K}|}\sqrt{M^2+B^2}}{\sigma\sqrt{\tau T} },
      \bigg (\frac{b(M^2+B^2)}{\sigma^2  \tau^2 \big(\frac{(1-\epsilon)}{\rho}(\beta + \frac{2}{\gamma} G^2) + \epsilon \beta  \big)  T}\bigg )^{\frac{1}{3}}, \\ &\frac{1}{4 \big(\frac{(1-\epsilon)}{\rho}(\beta + \frac{2}{\gamma} G^2) + \epsilon \beta  \big)  }\bigg \}  
 \end{align*} 
\end{proof}

\subsection{Excess Risk Analysis of Algorithm \ref{alg:fedcvar}
}\label{appdx:excess_risk} 
In order to derive an upper bound in excess risk $\mathcal{E}_r$, we use the results for stochastic gradient methods from Proposition 5.4. in \cite{pmlr-v48-hardt16}, which we also repeat using our notation in Lemma \ref{lemma:hardt_convergence} for convenience.

\begin{lemma}[Proposition 5.4. in \cite{pmlr-v48-hardt16}]\label{lemma:hardt_convergence} Let assumptions \ref{ass:loss_function} and \ref{ass:convex_sets}
hold. Let also $\overline{\bm \theta}_T= \sum\limits_{t=1}^T \bm \theta_t$ and $\bm \theta^{\star}_D=\arg\min\limits_{\bm \theta \in \Theta } \frac{1}{n} \sum\limits_{i=1}^n \ell(\bm \theta;z_i)$. Suppose we run $T$ steps of SGD with a learning rate $\eta=\frac{M\sqrt{n}}{G\sqrt{T(n+2T)}}$. Then,
\begin{equation}\label{eq:hardt_bound}
   \E\bigg[ \mathop{\E}\limits_{Z \sim p(Z)}[\ell(\overline{\bm \theta}_T;Z)] -   \frac{1}{n} \sum_{i=1}^n \ell(\bm \theta^{\star}_D;z_i)
     \bigg] 
    \leq \frac{1}{2} \bigg(\frac{M^2}{\eta T} + \frac{G^2 \eta}{n}(2T+n) \bigg) 
\end{equation}
where the outer expectation is taken w.r.t. the internal randomness of the algorithm and the randomness of samples $D$.  
\end{lemma}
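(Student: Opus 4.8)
The plan is to decompose the expected excess risk into an \emph{optimization} term (measured on the empirical objective) and a \emph{generalization} term (the gap between population and empirical risk of the produced iterate), then bound each separately. Writing $\hat R(\bm\theta)=\frac1n\sum_{i=1}^n\ell(\bm\theta;z_i)$ and $R(\bm\theta)=\E_{Z\sim p(Z)}[\ell(\bm\theta;Z)]$, and using the averaged iterate $\overline{\bm\theta}_T=\frac1T\sum_{t=1}^T\bm\theta_t$, I would insert $\pm\,\E[\hat R(\overline{\bm\theta}_T)]$ to obtain
\begin{equation*}
\E\big[R(\overline{\bm\theta}_T)-\hat R(\bm\theta^\star_D)\big]
=\underbrace{\E\big[R(\overline{\bm\theta}_T)-\hat R(\overline{\bm\theta}_T)\big]}_{\text{generalization}}
+\underbrace{\E\big[\hat R(\overline{\bm\theta}_T)-\hat R(\bm\theta^\star_D)\big]}_{\text{optimization}}.
\end{equation*}
The averaging is exactly what lets both pieces be controlled at the right rate, and since the stability-based bound on the first term holds in expectation over both the algorithm and the draw of $D$, this matches the outer expectation appearing in \eqref{eq:hardt_bound}.

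For the optimization term I would run the textbook averaged projected-SGD analysis. Using nonexpansiveness of the projection $\Pi_\Theta$, the one-step inequality $\|\bm\theta_{t+1}-\bm\theta^\star_D\|^2\le\|\bm\theta_t-\bm\theta^\star_D\|^2-2\eta\langle g_t,\bm\theta_t-\bm\theta^\star_D\rangle+\eta^2\|g_t\|^2$, unbiasedness $\E[g_t\mid\bm\theta_t]=\nabla\hat R(\bm\theta_t)$, the gradient bound $\|g_t\|\le G$ (Assumption \ref{ass:loss_function}), and convexity $\langle\nabla\hat R(\bm\theta_t),\bm\theta_t-\bm\theta^\star_D\rangle\ge\hat R(\bm\theta_t)-\hat R(\bm\theta^\star_D)$, I would telescope over $t=1,\dots,T$ using $\|\bm\theta_1-\bm\theta^\star_D\|\le M$ (Assumption \ref{ass:convex_sets}) and apply Jensen to the average. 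This yields $\E[\hat R(\overline{\bm\theta}_T)-\hat R(\bm\theta^\star_D)]\le \frac{M^2}{2\eta T}+\frac{\eta G^2}{2}$, which supplies the first summand and the $\frac{G^2\eta}{2}$ term, i.e., the $n$ contribution inside the $(2T+n)$ factor.

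The generalization term I would control through uniform algorithmic stability (Definition \ref{def:uniform_stability}). For two datasets differing in a single example I would couple the two SGD trajectories: since $\ell$ is convex and $\beta$-smooth (Assumption \ref{ass:loss_function}), the gradient map $\bm\theta\mapsto\bm\theta-\eta\nabla\ell(\bm\theta;z)$ is nonexpansive provided $\eta\le 2/\beta$, so steps touching a shared example do not grow the trajectory distance, while the single step touching the differing example (chosen with probability $1/n$) increases it by at most $2\eta G$. Summing gives $\E\|\bm\theta_t-\bm\theta'_t\|\le \tfrac{2\eta G}{n}(t-1)$, and averaging over $t$ collapses the $2T$ of the last iterate, since $\frac1T\sum_{t}\tfrac{2\eta G}{n}(t-1)\le \tfrac{\eta G T}{n}$; multiplying by the Lipschitz constant $G$ yields stability $\zeta\le \frac{G^2\eta T}{n}$ for $\overline{\bm\theta}_T$. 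The standard stability-implies-generalization identity then bounds $\E[R(\overline{\bm\theta}_T)-\hat R(\overline{\bm\theta}_T)]\le\zeta\le\frac{G^2\eta T}{n}$, i.e., the $2T$ contribution inside $(2T+n)$.

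Adding the two bounds gives $\frac{M^2}{2\eta T}+\frac{\eta G^2}{2}+\frac{G^2\eta T}{n}=\frac12\big(\frac{M^2}{\eta T}+\frac{G^2\eta}{n}(2T+n)\big)$, matching \eqref{eq:hardt_bound}; the prescribed $\eta=\frac{M\sqrt n}{G\sqrt{T(n+2T)}}$ is simply the value minimizing this expression. The main obstacle I anticipate is the stability step: getting the constant exactly right, since the averaging must turn the last-iterate factor $2T$ into $T$ (otherwise the $(2T+n)$ structure is off by a factor), and verifying that the prescribed step size respects the admissibility condition $\eta\le 2/\beta$ required for nonexpansiveness, which is where $\beta$-smoothness from Assumption \ref{ass:loss_function} is essential even though it never appears in the final bound.
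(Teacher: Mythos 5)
Your proof is correct and follows essentially the same argument as the paper's source: the paper states this lemma without proof, importing it verbatim from Proposition~5.4 of \cite{pmlr-v48-hardt16}, whose proof is precisely your stability-plus-optimization decomposition — averaged projected-SGD regret giving $\frac{M^2}{2\eta T}+\frac{\eta G^2}{2}$, and the coupling/nonexpansiveness stability bound with the factor-of-two saving from iterate averaging that yields $\frac{G^2\eta T}{n}$ and hence the $(2T+n)$ structure. Your closing caveat also matches the paper, which notes immediately after the lemma that the prescribed step size must satisfy $\eta \le 2/\beta$ (via Theorem~3.7 of \cite{pmlr-v48-hardt16}) and enforces this condition later in the proof of Lemma~\ref{lemma:excess_risk} through the choices of $T$ and $\gamma$.
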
 
{Note that the selected step-size in Lemma \ref{lemma:hardt_convergence} satisfies $\eta \leq \frac{2}{\beta}$ (see Theorem 3.7 in \cite{pmlr-v48-hardt16} for more details). } This condition is required to be satisfied in our analysis as well. Next, we apply these results in our setting and provide the formal proof of Lemma \ref{lemma:excess_risk}.
\begin{proof}[Proof of {Lemma \ref{lemma:excess_risk}
}]
 
For the excess risk analysis,  
we use the same substitutions as in Lemma \ref{lemma:multi_fedcvar_convergence}.  
Thus, Eq. \ref{eq:hardt_bound} for our setting becomes
\begin{equation}\label{adjusted_proposition}
\begin{array}{rl}
  &\E\bigg[  \mathop{\E}\limits_{K \in \mathcal{K}}\big[\mathop{\E}\limits_{Z|k}[\tilde{f}(\overline{{\bm \theta}}_T,\overline{c}_T;Z|k)
]\big]  
 -\frac{1}{n}\sum\limits_{k \in \mathcal{K}}\sum\limits_{i =1}^{n_k} \tilde{f}( {{\bm \theta}}^*_D, {c}^*_D;z^k_i) 
     \bigg]
\\ \\
=&
\E\bigg[
\mathop{\E}\limits_{Z}[\Tilde{f}(\overline{{\bm \theta}}_T,\overline{c}_T;Z)] \bigg] -  \E\bigg[\frac{1}{n}\sum\limits_{i=1}^n \Tilde{f}( {{\bm \theta}}^*_D, {c}^*_D;z_i)  
\bigg]
\\ \\  
\leq&  \frac{1}{2} \bigg(\frac{M^2 +B^2}{\eta T} + \frac{G_{\rho,\epsilon}^2 \eta}{ {\big ( {\sum\limits_{k \in \mathcal{K}} b_k} \big)}n}(2T+n) \bigg),
\end{array}
\end{equation}
with the learning rate being
\begin{equation}\label{learning_rate}
    \eta=\frac{\sqrt{M^2 +B^2}\sqrt{n {\big ( {\sum\limits_{k \in \mathcal{K}} b_k} \big)}}}{G_{\rho,\epsilon}\sqrt{T(n+2T)}}.
\end{equation}
Based on the remark we made above about the learning rate in Lemma \ref{lemma:hardt_convergence} being at most $\frac{2}{\beta}$, we must ensure that the respective step size in Eq. \ref{learning_rate} is at most $\frac{2{\rho}}{{(1-\epsilon)}(\beta + \frac{2}{\gamma} G^2) + \epsilon \beta\rho }$ as well. 

We know that for any $v,u>0$ we have that 
$2 \max\{v,u\} \geq v+u   \Rightarrow \frac{2}{v+u} \geq  \frac{1}{\max\{v,u\}}$. Thus, given also that $\epsilon\in (0,1]$, we obtain
\begin{align*}
        \frac{2{\rho}}{{(1-\epsilon)(\beta + \frac{2}{\gamma}G^2) +\epsilon\beta\rho }}& \geq  \frac{2{\rho}}{{\beta + \frac{2}{\gamma}G^2 +\epsilon\beta\rho }} 
         \geq \frac{\rho}{\max\{{\beta(1 +\epsilon\rho), \frac{2}{\gamma}G^2  }\}} 
         \geq \min\bigg\{\frac{{\rho}}{\beta(1 +\epsilon\rho)}, \frac{\rho\gamma}{2G^2} \bigg\}.
\end{align*}

Thus, it is sufficient  to ensure that (i) $\eta\leq\frac{{\rho}}{\beta (1+\epsilon\rho)}$,
and (ii) $\eta\leq\frac{\rho\gamma}{2G^2}$. 

We can satisfy the first case, $\eta\leq\frac{{\rho}}{\beta (1+\epsilon\rho)}$, by the choice of the rounds number $T$, that is
\begin{equation*}
    \begin{array}{rl}
         \eta=\sqrt{M^2 +B^2}\frac{\sqrt{n{\big ( {\sum\limits_{k \in \mathcal{K}} b_k} \big)}}}{G_{\rho,\epsilon}\sqrt{T(n+2T)}} \leq\frac{{\rho}}{\beta (1+\epsilon\rho)}            \\ \\
          \Longrightarrow n{\big ( {\sum\limits_{k \in \mathcal{K}} b_k} \big)}\big (M^2 +B^2\big ) \bigg (\frac{\beta (1+\epsilon\rho) }{\rho G_{\rho,\epsilon}}\bigg )^2\leq {T(n+2T)}.
    \end{array}
\end{equation*}

For the case $\eta\leq\frac{\rho\gamma}{2G^2}$, since $\rho \in (0,1)$,  $\epsilon\in (0,1]$ and
\begin{equation*}
    \begin{array}{rl}
         G_{\rho,\epsilon}&=\max \bigg\{ \sqrt{G^2  \epsilon^2 + (1-\epsilon)^2}, \sqrt{\frac{G^2(1-\epsilon +\epsilon \rho)^2 + (1-\epsilon)^2({\rho}-1)^2 }{\rho^2}} \bigg\}
         \geq \max \bigg\{  \epsilon G, \frac{ (1-\epsilon +\epsilon \rho) G}{\rho} \bigg\} \geq \frac{ (1-\epsilon +\epsilon \rho) G}{\rho} 
    \end{array}
\end{equation*}
we have that
\begin{equation}
\frac{\rho\gamma}{2G^2}  \geq \frac{\rho^2\gamma}{2G^2}  \geq \frac{(1-\epsilon +\epsilon \rho)^2\gamma}{2G_{\rho,\epsilon}^2}.
\end{equation}

Thus, by setting $\gamma= \frac{2 G_{\rho,\epsilon}^2}{(1-\epsilon +\epsilon \rho)^2}\eta$ we can satisfy condition (ii).

Finally, we yield the proposed bound by using the learning rate in Eq.  \ref{learning_rate} 
and the results in Lemma  \ref{lemma:properties} 
for which Eq. \ref{adjusted_proposition} becomes
\begin{equation}
\begin{array}{ll}
     \E\big[
     \mathop{\E}\limits_{Z}[{f}(\overline{{\bm \theta}}_T,\overline{c}_T;Z)]
     \big]
      \leq 
     \E\big[
     \mathop{\E}\limits_{Z}[\Tilde{f}(\overline{{\bm \theta}}_T,\overline{c}_T;Z)]
     \big]  \\ \\
         \leq  \E\big[\frac{1}{n}\sum\limits_{i=1}^n \Tilde{f}({{\bm \theta}}^*_D,{c}^*_D;z_i)  \big] +   G_{\rho,\epsilon}\sqrt{ \frac{(M^2 +B^2)(\frac{2}{n}+\frac{1}{T})} {\big ( {\sum _{k \in \mathcal{K}} b_k} \big)}} 
     \\ \\\leq  \E\big[\frac{1}{n}\sum\limits_{i=1}^n {f}( {{\bm \theta}}^*_D,{c}^*_D;z_i) \big] +   G_{\rho,\epsilon}\sqrt{ \frac{(M^2 +B^2)(\frac{2}{n}+\frac{1}{T})} {\big ( {\sum\limits_{k \in \mathcal{K}} b_k} \big)}}  +\frac{(1-\epsilon)\gamma}{\rho}
\end{array}
\end{equation}
\end{proof}

\section{Experimental Details}\label{appdx:experiments}
 
\subsection{Experimental Setup} 
We preprocess ACS Employment as described in \cite{ding2021retiring} and eICU akin to \cite{Pollard2018}. eICU dataset requires credentialed access and the procedure for requesting access is described on the dataset's website \url{https://eicu-crd.mit.edu/gettingstarted/access/}. 
{For MNIST we used a setting akin to  FashionMNIST splits in \cite{DRFA,AFL}. For ACS Employment we consider two settings: (a) a setting with 51 clients, where each client represents a different geo-location and (b) the hardest data partitioning using a sensitive group, proposed in \cite{papadaki}, where the data is split to $3$ clients based on the race classes: \{Black, White, Others\} to make the comparison in Appendix \ref{appdx:Additional_res} fairer. 
For Celeb-A, we randomly assign the partitions across two clients using three different seeds.}
For the ACS Employment and MNIST datasets, we use a MLP with a single hidden layer with $512$ neurons. For eICU we use logistic regression and for Celeb-A we use a ResNet-18. We select cross entropy to be the loss function in every training scenario. 

We report results such as the worst performing group and average utility that directly relate to the notion of minimax fairness. We also present the risk disparity between best and worst groups, conditioned on group size $\rho$. 
We train FedSRCVaR using local batch size $b_k=\{32,64,128\}$ and $\epsilon \approx 0.0$ is set as $\epsilon=0.01$, except for ACS Employment that we pick the best solution from $\epsilon=\{0.001,0.005,0.01,0.05\}$. We use the same batch size options for AFL. BPF is trained using $\epsilon=\{ 0.001,0.005,0.01,0.05\}$. FedAvg is trained using batches of sample size $128$ and local epochs $E=\{3,8,15\}$. We train all approaches using learning rates $\eta=\{0.01,0.001,0.0001,0.00001\}$, adversary/threshold learning rate $\eta_{adv}\break=0.001$ (where relevant). We pick the combination with the best solution for each case. For the proposed approach, FedSRCVaR, we report the results for group size $\rho =\{0.1,0.2,0.3,0.4,0.5,0.6,0.7,0.8,\break0.9\}$ and trade-off parameter $\epsilon=\{0.001,0.005,0.01,0.05,0.1,0.2,0.3,  0.4,0.5,0.6,0.7,0.8,0.9,1.0\}$ unless stated otherwise. 

In the figures we present the mean performance over three runs and in separate splits. The splits are generated using 3-fold cross-validation. If a fixed test set is provided by the authors of the dataset we use that for testing. For the experiments, we use soft-ReLU as a smoothed plus function with $\gamma=0.05$. 

\subsection{Implementation \& Training Devices} The experiments were implemented in Python using PyTorch. We produce results for BPF using the original code available at \url{github.com/natalialmg/BlindParetoFairness}. The experiments were realised using $4 \times$ NVIDIA Tesla V100 GPUs.

\subsection{Additional Empirical Results}\label{appdx:Additional_res}

We examine the cost in (minimax) group fairness when considering sensitive groups that are (potentially incorrectly) anticipated in the testing phase.
This scenario reflects realistic situations where we might lack access to information regarding the future evolution of demographics, even if we are aware of the sensitive demographic groups during training time. Hence, we demonstrate how a model trained on known demographic groups will perform when faced with the most challenging, worst-case scenario within these groups.
\begin{table}[H]
\centering
\caption{Cross Entropy risks comparison of minimax Pareto federated group fairness with real demographics (FedMinMax), unknown demographics (FedSRCVaR, ours) and baseline (FedAvg) on ACS Employment dataset. Results are averaged over 3 runs. }
\label{tab:known_groups}

\resizebox{0.6\columnwidth}{!}{
\begin{tabular}{lccccccc }
\toprule
 \multicolumn{2}{c}{\textbf{Group}} &\textbf{FedMinMax} &\multicolumn{2}{c}{ \textbf{FedSRCVaR} ($\epsilon=0.05$)}& \textbf{FedAvg} \\
\multicolumn{2}{c}{} & & $\rho= 0.1$ & $\rho= 0.3$& (Baseline)
\\  
\midrule
 Worst & $\rho= 0.1$ & 1.593$\pm$0.23 &0.713$\pm$0.07 & 0.724$\pm$0.06& 1.768$\pm$0.04 \\
group& $\rho= 0.3$ &1.176$\pm$0.08 &0.698$\pm$0.02& 0.695$\pm$0.05& 1.037$\pm$0.09 \\
\bottomrule
\end{tabular}
}
\end{table}
We compare against the optimal minimax group fair solution for known sensitive groups, generated using FedMinMax \cite{papadaki}, and present the results in Table \ref{tab:known_groups}. We leverage ACS Employment dataset and allocate data on $3$ clients based on the races \{Black, White, Others\},
as in \cite{papadaki}. 
We observe that FedMinMax has significantly poor performance on the worst groups of the selected $\rho$ sizes. On the other hand, FedSRCVaR outperforms FedMinMax in the worst group generated by all samples. 
We emphasize that our approach does not utilize existing groups. FedSRCVaR optimizes for the worst-case group that can be formulated from the training data, ensuring that \textit{no (real) group will perform worse than that}. However, the worst-case group may not be easily described as one of the commonly defined demographic groups; this subset of bad-performing samples may be distributed across the predefined group categories. On the other hand, FedAVG focuses solely on (average) utility, which means it will perform poorly on the worst possible subgroup.

These results indicate that the price of optimizing for unknown demographics is lower than the cost of optimizing for wrong demographics, given by the groups-agnostic approach. Hence, we argue that FedSRCVaR is not only beneficial for settings where the sensitive groups are completely unknown, but also it is preferable when the known sensitive groups could potentially change in the future; or in the general case that we are not completely certain that the sensitive demographics remain the same during training and testing time. 

For reference, we also provide the proportion of the various demographics in the predicted worst group that is generated by our approach for $ \epsilon\approx0$ and low $\rho$ values; and of the actual group populations in Table \ref{tab:groups_composition_fedsrcvar}. We notice that the composition of the worst group is very close to the actual size of the sensitive groups, especially as $\rho$ grows. 

\begin{table}[H]
\centering 
\caption{Worst group formation and actual group size on the test split on ACS Employment dataset. FedSRCVaR is evaluated for $\epsilon=0.05$ and $\rho=\{0.1,0.3\}$. We denote as \{U, E\} the labels \{Unemployed, Employed\}. }
\label{tab:groups_composition_fedsrcvar}
\resizebox{0.45\textwidth}{!}{
\begin{tabular}{llcccccc}
\toprule
& &\multicolumn{2}{c}{\textbf{White} (\%)} &\multicolumn{2}{c}{\textbf{Black} (\%)} &\multicolumn{2}{c}{\textbf{Other} (\%)}
\\
\cmidrule(lr){3-4}\cmidrule(lr){5-6}\cmidrule(lr){7-8}
\textbf{$\bm \epsilon$ } & \textbf{$\bm  \rho$ } &  {U} &  {E} &   {U} &  {E} &  {U} &  {E} \\
\midrule
0.05&0.1& 48.6 & 14.9 &4.3 & 1.2& 24.1 & 6.9  \\
&0.3&  30.2& 30.7  & 2.6 &2.& 17.8 & 16.7   \\
\midrule
\multicolumn{2}{c}{Actual size}     &33.4 & 27.9  & 2.9 &  1.9  & 18.3& 15.6 \\
\bottomrule
\end{tabular}}
\end{table}

\end{document}